\documentclass[11pt,oneside,letterpaper]{article}

\usepackage{natbib}
\usepackage[margin=1in]{geometry}

\usepackage{hyperref}       
\usepackage{url}            
\usepackage{booktabs}       
\usepackage{amsfonts}       
\usepackage{nicefrac}       
\usepackage{microtype}      
\usepackage{xcolor}         
\usepackage{cuted}

\usepackage{wrapfig}
\usepackage{caption}
\usepackage{comment}
\usepackage{amsmath}
\usepackage{amssymb}
\usepackage{mathtools}
\usepackage{amsthm}
\usepackage{bm}
\usepackage{algorithm}
\usepackage{algorithmic}

\usepackage[capitalize,noabbrev]{cleveref}

\usepackage{subfigure}

\theoremstyle{plain}

\newtheorem{proposition}{Proposition}
\newtheorem{lemma}{Lemma}
\newtheorem*{lemma*}{Lemma}

\theoremstyle{definition}
\newtheorem{definition}{Definition}
\newtheorem{assumption}{Assumption}

\newtheorem{example}{Example}

\usepackage{dsfont}
\usepackage{palatino}

\usepackage{cancel}

\usepackage{graphicx}
\usepackage{multirow}
\usepackage{float}
\usepackage{pgffor}
\foreach \x in {a,...,z}{%
\expandafter\xdef\csname vec\x \endcsname{\noexpand\ensuremath{\noexpand\bm{\x}}}
}
\foreach \x in {A,...,Z}{%
\expandafter\xdef\csname vec\x \endcsname{\noexpand\ensuremath{\noexpand\bm{\x}}}
}
\foreach \x in {A,...,Z}{%
\expandafter\xdef\csname c\x \endcsname{\noexpand\ensuremath{\noexpand\mathcal{\x}}}
}
\foreach \x in {A,...,Z}{%
\expandafter\xdef\csname bb\x \endcsname{\noexpand\ensuremath{\noexpand\mathbb{\x}}}
}
\usepackage{xspace}

\newcommand{\btheta}{\bm{\theta}}

\usepackage{pifont}

\renewcommand{\Pr}{P}

\usepackage{tikz}
\usetikzlibrary{arrows.meta}
\usetikzlibrary{positioning}
\usetikzlibrary{decorations}

\usepackage{amsmath}
\usepackage{xspace}
\newcommand{\bdp}{BCDP\xspace}

\usepackage{xcolor}  
\usepackage{hyperref}
\usepackage[normalem]{ulem}

\definecolor{Mahogany}{RGB}{192, 64, 0}   
\definecolor{OliveGreen}{RGB}{107, 142, 35} 
\definecolor{Violet}{RGB}{148, 0, 211}  
\definecolor{Teal}{RGB}{0, 128, 128} 
\definecolor{DarkerViolet}{RGB}{128, 0, 180}

\definecolor{darkteal}{rgb}{0.0, 0.4, 0.4}
\hypersetup{
    colorlinks,
    linkcolor= {DarkerViolet},
    citecolor={darkteal},
    urlcolor={OliveGreen}
}

\newcommand{\notes}{1}
\ifnum\notes=1
    \setlength{\hoffset}{4pt}
    \setlength{\oddsidemargin}{0pt}
    \setlength{\marginparwidth}{0.8in}
    \newcommand{\mynote}[3]{\marginpar{{\tiny \parbox{0.8in}{\color{#2} \sf {#1}: {#3}}}}}
    \newcommand{\inlinenote}[3]{{\bfseries \color{#2} {#1}: #3}}
    
    \newcommand{\mystrikeout}[2]{{\color{#1} \sout{#2}}} 
\else
    \newcommand{\mynote}[3]{}
    \newcommand{\inlinenote}[3]{}
    
    \newcommand{\mystrikeout}[2]{}
\fi

\usepackage{thm-restate}

\renewcommand{\epsilon}{\varepsilon}

\newcommand{\red}[1]{\textcolor{red}{ #1}}
\newcommand{\violet}[1]{\textcolor{violet}{ #1}}

\usepackage{comment}
\usepackage{setspace}
\vskip 2pc \onehalfspace

\begin{document}

\title{Enhancing Feature-Specific Data Protection via \\ Bayesian Coordinate Differential Privacy~\thanks{The authors are listed in alphabetical order.}}

\author{Maryam Aliakbarpour \thanks{Department of Computer Science \& Ken Kennedy Institute, Rice University 
} 
\and 
Syomantak Chaudhuri  \thanks{Department of Electrical Engineering and Computer Sciences, University of California, Berkeley} 
\and 
Thomas A. Courtade $^{\text{\textdaggerdbl}}$
\and 
Alireza Fallah $^{\text{\textdaggerdbl}}$
\and 
Michael I. Jordan \thanks{
Department of Electrical Engineering and Computer Sciences \& 
Department of Statistics,
University of California, Berkeley;
INRIA, Paris}}

\date{October 23, 2024}
\maketitle

\sloppy

\begin{abstract}
Local Differential Privacy (LDP) offers strong privacy guarantees without requiring users to trust external parties. However, LDP applies uniform protection to all data features, including less sensitive ones, which degrades performance of downstream tasks. To overcome this limitation, we propose a Bayesian framework, Bayesian Coordinate Differential Privacy (BCDP), that enables feature-specific privacy quantification. This more nuanced approach complements LDP by adjusting privacy protection according to the sensitivity of each feature, enabling improved performance of downstream tasks without compromising privacy. We characterize the properties of BCDP and articulate its connections with  standard non-Bayesian privacy frameworks. We further apply our BCDP framework to the problems of private mean estimation and ordinary least-squares regression. The BCDP-based approach obtains improved accuracy compared to a purely LDP-based approach, without compromising on privacy.    
\end{abstract}

\section{Introduction}

The rapid expansion of machine learning applications across various sectors has fueled an unprecedented demand for data collection. With the increase in data accumulation, user privacy concerns also intensify. Differential privacy (DP) has emerged as a prominent framework that enables algorithms to utilize data while preserving the privacy of individuals who provide it \cite{warner1965randomized, evfimievski2003limiting, dwork2006calibrating,  dwork2014algorithmic, vadhan2017complexity, Desfontaines2020RealWorldDP}. A variant of DP, local differential privacy (LDP), suitable for distributed settings where users do not trust any central authority has been extensively studied~\citep{warner1965randomized, evfimievski2003limiting, beimel2008distributed, kasiviswanathan2011can, ChanSS12}. In this framework, each user shares an obfuscated version of their datapoint, meaningful in aggregate to the central server yet concealing individual details. Leading companies, including Google~\citep{erlingsson2014rappor}, Microsoft~\citep{ding2017collecting}, and Apple~\citep{apple2017learning, thakurta2017learning}, have employed this approach for data collection.

In the LDP framework, a privacy parameter must be set---typically by the data collector---that determines the level of protection for user information. This decision is particularly challenging in high-dimensional settings, where user data includes multiple attributes such as name, date of birth, and social security number, each with differing levels of sensitivity. One common strategy is to allow the most sensitive data feature to dictate the level of privacy, providing the highest level of protection. Alternatively, separate private mechanisms can be applied to each feature according to its required privacy level. However, this approach fails when there is correlation between features. For instance, setting a lower level of privacy for a feature that is consistently aligned with a more sensitive one risks compromising the privacy of the sensitive feature.

This complexity raises a compelling question: Can we tailor privacy protection to specific data features if the correlation among data coordinates is controlled, without defaulting to the privacy level of the most sensitive coordinate? In other words, can we leverage users' lower sensitivity preferences for certain features to reduce the error in our inference algorithms, provided the correlation with sensitive coordinates is weak?

\paragraph{Our contributions:} 
Our main contribution is addressing the challenge of feature-specific data protection from a Bayesian perspective. We present a novel Bayesian privacy framework, which we refer to as {\em Bayesian Coordinate Differential Privacy} (BCDP). BCDP  complements LDP and allows a tailored privacy demand per feature. In essence, BCDP ensures that the odds of inferring a specific feature (also referred to as a  coordinate) remain nearly unchanged before and after observing the outcome of the private mechanism. The framework’s parameters control the extent to which the odds remain unchanged, enabling varying levels of privacy per feature.
As part of developing this new framework, we explore its formal relationship with other related DP notions and examine standard DP properties, such as {\em post-processing} and {\em composition}.

To demonstrate the efficacy of our framework, we study two fundamental problems in machine learning under a baseline level of LDP privacy, coupled with more stringent per-coordinate BCDP constraints. First, we address {\em multivariate mean estimation}. Our proposed algorithm satisfies BCDP constraints while outperforming a standard LDP algorithm, where the privacy level is determined by the most sensitive coordinate. Our experimental results confirm this advantage. Second, we present an algorithm for {\em ordinary least squares regression} under both LDP and BCDP constraints.

A key technical component of our algorithm for these two problems is a mechanism that builds on {\em any} off-the-shelf LDP mechanism and can be applied to various problems. Our algorithm involves making parallel queries to the LDP mechanism with a series of carefully selected privacy parameters where the $i$-th query omits the $i-1$ most sensitive coordinates. The outputs are then aggregated to produce the most accurate estimator for each coordinate. 

\paragraph{Bayesian Feature-Specific  Protection:}
We now give a high-level discussion of privacy in the Bayesian framework to motivate  our approach. Formal definitions are  in Section~\ref{sec:formulation}.
Consider a  set of users, each possessing a high-dimensional data point $\vecx$ with varying privacy protection needs across different features. For instance, $\vecx$ may contain genetic markers for diseases, and the $i$-th coordinate $x_i$ could reveal Alzheimer’s markers that the user wishes to keep strictly confidential. In a local privacy setting, users apply a locally private mechanism $M$ to obtain obfuscated data $M(\vecx)$, for public   aggregation and analysis. Our focus is on understanding what can be inferred about $\vecx$, especially the sensitive feature $x_i$, from the obfuscated data.

Now, imagine an adversary placing a bet on a probabilistic event related to $\vecx$. Without extra information, their chances of guessing correctly are based on prior knowledge of the population. However, access to $M(\vecx)$ could prove advantageous for inference. Privacy in the Bayesian model is measured by how much $M(\vecx)$ improves the adversary’s chances. In other words, the privacy parameter limits how much their odds can shift.

The standard Bayesian interpretation of LDP ensures uniform privacy across all data features. In contrast, our framework BCDP enhances LDP by enabling feature-specific privacy controls. That is, BCDP restricts adversarial inference of events concerning  individual coordinates. For example, BCDP ensures that the advantage an adversary gains from observing $M(\vecx)$ when betting on whether a person has the Alzheimer’s trait (i.e., $x_i = 1$) is smaller than what would be achieved under a less stringent LDP constraint.

We remark that BCDP does not replace LDP, since it lacks global data protection. Also, correlation plays a crucial role in BCDP's effectiveness: if a sensitive feature is highly correlated with less sensitive features, increasing protection for the sensitive feature imposes stricter protection for other features as well. As the correlation between features increases, BCDP's guarantees converge to the  uniform guarantees of LDP.

\paragraph{Related Work:}
Statistical inference with LDP guarantees has been extensively studied. For comprehensive surveys  see \cite{xiong2020comprehensive, yang2023local}.
Below, we discuss the results that are most relevant to our setup.

Several papers consider the uneven protection of privacy across features. For instance, \citet{ghazi2022algorithms} and \citet{Acharya20} focus on settings where the privacy parameters vary across different neighboring datasets. Other models arise from altering the concept of neighboring datasets~\citep{kifer2014pufferfish, HeMD14}, or they emphasize indistinguishability guarantees within a redefined metric space for the datasets or data points~\cite{chatzikokolakis13, AlvimCPP18, imola2022balancing, andres2013geo}. Another relevant notion for classification tasks is {\em Label-DP} which considers labels of training data as sensitive, but  not the features themselves~\citep{chaudhuri2011sample, beimel2013private, wang2019sparse, ghazi2021deep}. 
A recent generalization of Label-DP, proposed by \citet{mahloujifar2023machine}, is Feature-DP, which allows specific features of the data to be disclosed by the algorithm while ensuring the privacy of the remaining information. A line of work, e.g., \cite{kifer2011no, Kenthapadi_Korolova_Mironov_Mishra_2013}, studies attribute DP, which limits changes in the output distribution of the privacy mechanism when a single coordinate of the data is modified.

A common shortcoming in the aforementioned approaches is that the heterogeneous privacy protection across features, without accounting for their correlations, can lead to unaddressed privacy leakage—an issue our framework is specifically designed to mitigate. Another key distinction of our work is the integration of the BCDP framework with the LDP framework, enabling simultaneous protection at both the feature level and the global level.

A Bayesian approach to Central-DP is considered in
\citet{triastcyn2020bayesian} where the notation of neighboring dataset are taken over pairs that are drawn from the same distribution. This is similar in spirit to our framework but we operate in the Local-DP regime so we work with a prior instead.
\citet{Xiao23} consider a Bayesian approach to privacy where the goal is to prevent an observer from being able to {\em nearly } guess the input to a mechanism under a given notion of distance and error probability.
Another Bayesian interpretation of DP is presented in \citet{kasiviswanathan2014semantics} and show that DP ensures that the posterior distribution after observing the output of a DP algorithm is close to the prior in Total-Variation distance. None of these results explicitly focus on varying privacy features across coordinates.

Private mean estimation and private least-squares regression are well studied problems in literature in both central \citep{biswas2020coinpress, karwa2017finite, Kifer12,Vu09} and local \citep{Asi22,Feldman2021,zheng2017collect} DP regimes.
We depart from these works as we consider a BCDP constraint in addition to an LDP constraint.

\paragraph{Organization:} We present a formal treatment of our framework in \cref{sec:formulation}. 
Some key properties of the framework are presented in \cref{sec:properties}. We consider the problem of {\em mean estimation} in \cref{sec:mean} and {\em ordinary least-squares regression} in \cref{sec:LSE}.

\section{Problem Formulation} \label{sec:formulation}
Let $\cX$ denote the data domain represented as the  Cartesian product $\cX := \bigtimes_{j=1}^d \cX_j$. One can consider this as a general decomposition where each of $d$ canonical data coordinates can be categorical or a real value, or a combination of both, as the setting dictates. For a vector $\vecx \in \cX$, we let $x_i$  denote the $i$-th coordinate, and $\vecx_{-i}$ the vector with $(d-1)$ coordinates  obtained by removing the $i$-th coordinate $x_i$ from $\vecx$. A private mechanism $M$ is a randomized algorithm that maps $\vecx$ to $M(\vecx)$ in an output space $\cY$.

Let $(\cX,\cF_{\cX})$ and $(\cY,\cF_{\cY})$ be measurable spaces. Let each coordinate space $\cX_i$ be equipped with  $\sigma$-algebra $\cF_{\cX_i}$, and take $\cF_{\cX}$ equal to the product $\sigma$-algebra $\bigtimes_{i=1}^d \cF_{\cX_i}$.
A randomized mechanism $M$ from $\cX$ to $\cY$ can be represented by a Markov kernel $\mu:(\cX, \cF_{\cX})\to (\cY,\cF_{\cY})$ where for an input $\vecx \in \cX$, the output $M(\vecx)$ is a $\cY$-valued random variable with law $\mu_{\vecx}(\cdot)$.

\paragraph{Local differential privacy} We first revisit the definition of Local Differential Privacy (LDP).
\begin{definition}[Local DP~\citep{kasiviswanathan2011can}] \label{definition:LDP}
A mechanism $M$ is $\varepsilon$-LDP if, for all $R \in \cF_{\cY}$  and all $\vecx, \vecx’ \in \cX$, we have:
\begin{equation}
\mu_{\vecx}(R) \leq e^\varepsilon \mu_{\vecx'}(R).   
\end{equation}
\end{definition}

With the understanding that $M(\vecx) \sim \mu_{\vecx}(\cdot)$, \cref{definition:LDP} can be rewritten in the more familiar form $\text{Pr}\{M(\vecx) \in R\} \leq e^{\epsilon} \text{Pr}\{M(\vecx') \in R\} \ \forall \vecx, \vecx' \in \cX, R \in \cF_{\cY}$.

Next, we focus on a {\em Bayesian interpretation} of this setting. 
Suppose we have an underlying data distribution $\pi$. That is, we equip $(\cX,\cF_{\cX})$ with the probability measure $\pi$ which we call the {\em prior}.
This induces a probability space $(\cX \times \cY, \cF_{\cX}\times \cF_{\cY}, P)$ where the measure $P$ is characterized via
\begin{equation}
    P(S \times R) = \int_S \mu_{\vecx}(R)d\pi(\vecx), \ S \in \cF_{\cX}, R \in \cF_{\cY}.
\end{equation}
Note that the mechanism $M$ defining $\mu$ is precisely represented on this space by the random variable $M: (\vecx,y) \in \cX\times \cY \mapsto y$.  By a slight abuse of notation, we continue to denote the (randomized) output of the mechanism by $M(\vecx)$.
We first define the sets on $\cX$ which have positive probability under $\pi$
\begin{align}
    \cF_{\cX}^+ &= \{ S \in \cF_{\cX}| \pi(S) > 0 \}.
\end{align}
For $S \in \cF_{\cX}^+$ and $R \in \cF_{\cY}$, note that
\begin{align}
P\{M(\vecx) \in R| \vecx \in S\} 
&= \frac{P(S \times R)}{\pi(S)}.
\end{align}

\paragraph{Bayesian differential privacy:} We now introduce a Bayesian formulation of the LDP definition which involves both the mechanism $M$ \emph{and} the prior $\pi$.
This can be seen as the local version of the definitions proposed in earlier works, e.g., \cite{kifer2014pufferfish, triastcyn2020bayesian}.

\begin{definition}[Bayesian DP] \label{def:pi-DP}
The pair $(\pi,M)$ is $\epsilon$-BDP if for all $R \in \cF_{\cY}$ and all $S,S' \in \cF_{
\cX}^+$, we have:
\begin{align} 
P\{M(\vecx) \in R| \vecx \in S\} \leq e^{\epsilon} P\{M(\vecx) \in R| \vecx \in S'\}. \label{eq:D-def-BDP}
\end{align}
\end{definition}

We show in Proposition~\ref{lemma:LDP_Bayesian} that $\epsilon$-LDP and $\epsilon$-BDP can be viewed as equivalent, modulo some technicalities.

\begin{restatable}{proposition}{lemLDPBayesian}
\label{lemma:LDP_Bayesian} 
If $M$ is $\epsilon$-LDP then $(\pi,M)$ is $\epsilon$-BDP.
Conversely, if $(\pi,M)$ is $\epsilon$-BDP, then (assuming $\cY$ is a Polish space) there exists a mechanism $M'$ which is $\epsilon$-LDP such that $P\{M(\vecx) = M'(\vecx)\} = 1$.
\end{restatable}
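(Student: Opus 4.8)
The plan is to prove the two implications separately, with the forward direction following from a short averaging argument and the converse carrying the bulk of the work.

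For the forward direction (LDP $\Rightarrow$ BDP), I would integrate the pointwise LDP guarantee against the prior. Fix $S,S'\in\cF_{\cX}^+$ and $R\in\cF_{\cY}$. For each fixed $\vecx\in S$, the number $\mu_{\vecx}(R)$ is dominated by $e^{\epsilon}\mu_{\vecx'}(R)$ for \emph{every} $\vecx'$, so averaging the right-hand side over $\vecx'\in S'$ preserves the bound and gives $\mu_{\vecx}(R)\le e^{\epsilon}P(S'\times R)/\pi(S')$. Averaging this over $\vecx\in S$ and recalling that $P\{M(\vecx)\in R\mid \vecx\in S\}=P(S\times R)/\pi(S)=\int_S \mu_{\vecx}(R)\,d\pi(\vecx)/\pi(S)$ yields exactly \eqref{eq:D-def-BDP}.

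The converse is the substantive part. I would first reduce the averaged BDP inequality to a pointwise almost-everywhere statement for each fixed set. Fix $R\in\cF_{\cY}$ and write $f_R(\vecx)=\mu_{\vecx}(R)\in[0,1]$, which is $\cF_{\cX}$-measurable by definition of the kernel. Let $a=\operatorname{ess\,sup}_{\vecx} f_R$ and $b=\operatorname{ess\,inf}_{\vecx} f_R$ with respect to $\pi$. For any $\delta>0$ the sets $\{f_R>a-\delta\}$ and $\{f_R<b+\delta\}$ lie in $\cF_{\cX}^+$, and their $f_R$-averages are at least $a-\delta$ and at most $b+\delta$ respectively; applying BDP to this pair and letting $\delta\to 0$ gives $a\le e^{\epsilon} b$. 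Hence for each fixed $R$ there is a $\pi$-null set, namely $\{f_R>a\}\cup\{f_R<b\}$, off of which $\mu_{\vecx}(R)\le e^{\epsilon}\mu_{\vecx'}(R)$.

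The main obstacle is that this null set depends on $R$, and there are uncountably many choices of $R$, so a naive union over all $R$ fails. This is exactly where the Polish assumption enters: $\cY$ is second countable, so $\cF_{\cY}$ is generated by a countable algebra $\cA$. I would take $N$ to be the union of the null sets arising from each $R\in\cA$; since $\cA$ is countable, $\pi(N)=0$, and because $\pi$ is a probability measure, $N\neq\cX$. Off $N$ the inequality $\mu_{\vecx}(R)\le e^{\epsilon}\mu_{\vecx'}(R)$ holds for all $R\in\cA$ simultaneously. To upgrade from $\cA$ to all of $\cF_{\cY}$, I would fix $\vecx,\vecx'\notin N$ and note that the collection of $R$ satisfying this inequality is closed under increasing and decreasing limits, by continuity from below and above of the measures $\mu_{\vecx},\mu_{\vecx'}$; it is therefore a monotone class containing the algebra $\cA$, and the monotone class theorem extends it to $\sigma(\cA)=\cF_{\cY}$. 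Finally, I would define $M'$ by setting $\mu'_{\vecx}=\mu_{\vecx}$ for $\vecx\notin N$ and $\mu'_{\vecx}=\mu_{\vecx_0}$ for $\vecx\in N$, where $\vecx_0\notin N$ is any fixed reference input. Every $\mu'_{\vecx}$ then equals some $\mu_{\vecy}$ with $\vecy\notin N$, so $M'$ is $\epsilon$-LDP; and since $\mu'$ and $\mu$ agree off the $\pi$-null set $N$, the two mechanisms induce the same joint law on $\cX\times\cY$, giving $P\{M(\vecx)=M'(\vecx)\}=1$.
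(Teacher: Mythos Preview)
Your argument is correct and matches the paper's proof almost exactly: the same averaging for the forward direction, and for the converse the same $\operatorname{ess\,sup}/\operatorname{ess\,inf}$ bound for each fixed $R$, the same countable union of null sets via second countability, and the same modification of the kernel on that null set. The only variation is in the final extension step---you pass from a countable generating algebra to all of $\cF_{\cY}$ via the monotone class theorem, whereas the paper works with a countable base of open sets and extends to Borel sets using outer regularity of probability measures on Polish spaces; these are interchangeable standard devices, and your route is arguably slightly cleaner since it uses only countable generation of $\cF_{\cY}$.
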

The reverse direction of the equivalence above relies on  $\cY$ being a Polish space, which should capture all settings of practical interest. The proof can be found in Appendix~\ref{sec:proof_of_lemma:LDP_Bayesian}.

As discussed in \cite{kifer2014pufferfish}, we can interpret the BDP definition in terms of odds ratios. The \textit{prior odds} of two disjoint events $S$ and $S'$ is given by $\pi(S)/\pi(S')$. Now, for a set $R \in \mathcal{F}_{\mathcal{Y}}$ with positive measure, the \textit{posterior odds}, after observing mechanism output $M(\vecx) \in R$, is equal to
\begin{equation}
\frac{P\{\vecx \in S| M(\vecx) \in R\}}{P\{\vecx \in S'| M(\vecx) \in R\}}.   
\end{equation}
One can easily verify that the BDP definition implies that the ratio of posterior and prior odds, also known as the \textit{odds ratio}, is in the interval $[e^{-\epsilon}, e^\epsilon]$. In other words, the BDP definition limits how much the output of the mechanism changes the odds, which in turn limits our ability to distinguish whether the data $\vecx$ was in $S$ or $S'$.

\paragraph{Coordinate differential privacy:} 
We wish to measure the privacy loss of each coordinate incurred by a mechanism $M$.
Initially, one might hypothesize that by altering the definition of LDP to reflect changes in specific coordinates, we could achieve privacy protection for certain features of the user's data. 
More precisely, consider the following definition of Coordinate DP (CDP)\footnote{The acronym CDP is not to be confused with Concentrated DP, which is not considered in this work.} where $\bm{c}$ represents a vector of the desired levels of privacy for each coordinate.

\begin{definition}[Coordinate DP] \label{def:coordinate-dp}
Let $\bm{c} \in \bbR_{\geq 0}^d$.
A mechanism $M$ is $\bm{c}$-CDP if, for all $R \in \cF_{\cY}$ and all $\vecx, \vecx' \in \cX$, we have the implication:
\begin{equation}
\vecx_{-i}=\vecx'_{-i} ~\Rightarrow~\mu_{\vecx}(R) \leq e^{c_i} \mu_{\vecx'}(R).
\end{equation}
\end{definition}
A special case of CDP, where all $c_i$’s are equal, is referred to as attribute DP in the literature \cite{kifer2011no, Kenthapadi_Korolova_Mironov_Mishra_2013, ghazi2022algorithms}. Additionally, CDP itself can be viewed as a special case of Partial DP, introduced in \cite{ghazi2022algorithms}.

The drawback of this definition is that it does not account for potential  correlation among the data coordinates and, therefore, does not provide the type of guarantee we hope for regarding sensitive parts of the data. In other words, in $\bm{c}$-CDP, stating that the privacy level of coordinate $i$ is $c_i$ could be misleading. For instance, having $c_i=0$ does not necessarily ensure that no information is leaked about $x_i$, as the other (potentially less-well protected) coordinates $\vecx_{-i}$ may be correlated with it.

\paragraph{Our formulation: Bayesian coordinate differential privacy:} In order to capture the correlation among the data coordinates, we adopt a Bayesian interpretation of differential privacy. 
As seen in \cref{lemma:LDP_Bayesian}, BDP and LDP are closely related and this serves as the main motivation for our definition.

For $i\in [d]$, define 
\begin{equation}
    \cF_{\cX_i}^+ = \{ S_i \in \cF_{\cX_i} | \pi\{x_i \in S_i\}  > 0 \}.
\end{equation}
For $S_i \in \cF_{\cX_i}^+$, we have
\begin{equation}
    P\{M(\vecx) \in R|x_i \in S_i\} = \frac{P( (S_i \times \cX_{-i}) \times R)}{\pi(S_i \times \cX_{-i})} ,  \label{eq:P-i-def}
\end{equation}
where $\cX_{-i} = \bigtimes_{j \in [d], j \neq i} \cX_j$.
Thus, similar to \cref{def:pi-DP}, we have a natural version of privacy for each coordinate which we shall refer to as Bayesian Coordinate DP (BCDP).

\begin{definition}[Bayesian Coordinate DP] \label{def:BCDP}
Let $\bm\delta \in \bbR_{\geq 0}^d$. 
A pair $(\pi,M)$ is $\bm\delta$-BCDP if, for each $i \in [d]$, and for all $ R \in \cF_{\cY}$, $S_i, S_i' \in \cF_{\cX_i}^+$, we have: 
\begin{equation} \label{eqn:def_BCDP}
P\{M(\vecx) \in R|x_i \in S_i\}  \leq e^{\delta_i} P\{M(\vecx) \in R|x_i \in S_i'\}.
\end{equation}
\end{definition}
Unlike the Coordinate-DP definition, this definition accounts for potential  correlation between coordinates through incorporation of  the prior $\pi$. 

It is worth noting that, similar to the BDP definition, we can interpret the BCDP definition in terms of the odds ratio, with the difference that we consider the two events $\{x_i \in S_i\}$ and $\{x_i \in S_i’\}$. In other words, having a small $\delta_i$ guarantees that the mechanism’s output does not provide much useful information for discriminating between the possibilities that the underlying data has coordinate $x_i$ in  $S_i$ or $S_i’$. This aligns with our goal of proposing a new formulation that characterizes how much information is revealed about each particular coordinate. We further discuss this interpretation in the next section.

\section{Properties of BCDP} \label{sec:properties}
In this section, we discuss a number of properties following from the BCDP definition. 
\cref{fig:DP-relations} compiles some of the results from \cref{sec:formulation} and \cref{sec:properties} on how the different notions of Local DP (LDP), Coordinate-DP (CDP), Bayesian DP (BDP), and Bayesian Coordinate-DP (BCDP) are interrelated.

\subsection{BCDP through the Lens of Hypothesis Testing} \label{sec:HT}
In this subsection, we present an interpretation of what the privacy vector $\bm\delta$ quantifies in  $\bm\delta$-BCDP.
We begin by revisiting the connection between hypothesis testing and the definition of differential privacy, and demonstrate how this connection extends to the $\bm{\delta}$-BCDP definition. For an $\epsilon$-LDP mechanism $M$ represented by Markov kernel $\mu$, suppose an adversary observes a realization of the output $Y$ and is tasked with determining whether the input was $\vecx$ or $\vecx’$. This scenario can be formulated as the following binary hypothesis problem:
\begin{align*}
H_0 &:  Y \sim \mu_{\vecx}(\cdot),\\
H_1 &: Y \sim \mu_{\vecx'}(\cdot).
\end{align*}

Let the rejection region be $R \subseteq \cY$.
Denote the type~I error (false alarm) rate as $\alpha(R,M,\vecx,\vecx')$ and the type~II error (missed detection) rate as $\beta(R,M,\vecx,\vecx')$.
We restate a known result below \citep{Kairouz15,Wasserman10}.

\begin{proposition} \label{lemma:HT-DP}
A mechanism $M(\cdot)$ is $\epsilon$-LDP iff the following condition holds for any $\vecx,\vecx' \in \cX$ and any $R \in \cF_{\cY}$:
\begin{align}    e^{\epsilon}\alpha(R,M,\vecx,\vecx') + \beta(R,M,\vecx,\vecx') &\geq 1. \label{eq:TypeI-II}
\end{align}
\end{proposition}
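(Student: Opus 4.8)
The plan is to prove both directions of the iff by directly unpacking the definitions of Type I and Type II error rates in terms of the Markov kernel $\mu$. Recall that with rejection region $R$, rejecting $H_0$ means declaring the input was $\vecx'$ (the $H_1$ hypothesis). So the Type I error (false alarm, rejecting $H_0$ when it is true) is $\alpha(R,M,\vecx,\vecx') = \mu_{\vecx}(R)$, and the Type II error (missed detection, failing to reject $H_0$ when $H_1$ holds) is $\beta(R,M,\vecx,\vecx') = \mu_{\vecx'}(R^c) = 1 - \mu_{\vecx'}(R)$. **First I would** state these two identities explicitly, since the entire proposition is really just a translation of the LDP inequality into this language.

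**For the forward direction** ($\epsilon$-LDP $\Rightarrow$ \eqref{eq:TypeI-II}), I would apply Definition~\ref{definition:LDP} to the complement region $R^c$ with the roles of $\vecx,\vecx'$ arranged so that $\mu_{\vecx'}(R^c) \le e^{\epsilon}\mu_{\vecx}(R^c)$. Substituting $\beta = 1 - \mu_{\vecx'}(R)$ and $\mu_{\vecx}(R^c) = 1 - \alpha$, this reads $1 - \beta \le e^{\epsilon}(1-\alpha)$... which is the wrong direction. So more carefully: I want to produce $e^{\epsilon}\alpha + \beta \ge 1$. Writing $\beta = 1 - \mu_{\vecx'}(R)$, the target inequality $e^{\epsilon}\alpha + \beta \ge 1$ becomes $e^{\epsilon}\mu_{\vecx}(R) \ge \mu_{\vecx'}(R)$, which is \emph{exactly} the LDP inequality of Definition~\ref{definition:LDP} applied to the set $R$ with the pair $(\vecx',\vecx)$. **The key observation** is thus that \eqref{eq:TypeI-II} is literally a rewriting of the LDP bound $\mu_{\vecx'}(R) \le e^{\epsilon}\mu_{\vecx}(R)$, so the algebra is a one-line substitution.

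**For the reverse direction** (\eqref{eq:TypeI-II} for all $R,\vecx,\vecx'$ $\Rightarrow$ $\epsilon$-LDP), I would reverse the same computation: given an arbitrary $R \in \cF_{\cY}$ and arbitrary $\vecx,\vecx'$, apply the hypothesis-testing inequality to the rejection region $R$ with hypotheses $H_0: Y\sim\mu_{\vecx}$, $H_1: Y\sim\mu_{\vecx'}$. This yields $e^{\epsilon}\mu_{\vecx}(R) + 1 - \mu_{\vecx'}(R) \ge 1$, i.e.\ $\mu_{\vecx'}(R) \le e^{\epsilon}\mu_{\vecx}(R)$. Since $R,\vecx,\vecx'$ were arbitrary, this is precisely the LDP condition. **The only subtlety** — and the single place requiring any care — is bookkeeping the swap of $\vecx$ and $\vecx'$ between the symmetric LDP definition and the asymmetric $(\alpha,\beta)$ setup, together with the complementation $R \leftrightarrow R^c$; once the two error-rate identities are pinned down correctly, both directions collapse to the same elementary rearrangement, so I expect \textbf{no genuine obstacle}, only the need to state the correspondence between rejection regions and the LDP bound unambiguously.
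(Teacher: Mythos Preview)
Your proposal is correct. The paper does not actually supply its own proof of Proposition~\ref{lemma:HT-DP}; it is presented as a restatement of a known result with citations to \citet{Kairouz15} and \citet{Wasserman10}. That said, the paper does prove the BCDP analogue (Theorem~\ref{theorem:HT}) in Appendix~\ref{sec:HT-proof}, and the argument there is essentially identical to yours: write $\alpha = \Pr\{M(\vecx)\in R\mid \cdot\}$ and $\beta = 1 - \Pr\{M(\vecx)\in R\mid \cdot\}$, then observe that the inequality $e^{\delta_i}\alpha + \beta \geq 1$ is a one-line algebraic rearrangement of the defining privacy inequality, with the converse obtained by the same substitution in reverse. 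Your bookkeeping of the $\vecx \leftrightarrow \vecx'$ swap and the rejection-region identities matches exactly what the paper does in that analogous proof.
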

We note that the criterion \eqref{eq:TypeI-II} can be equivalently replaced by the criterion
$$
    \alpha(R,M,\vecx,\vecx') + e^{\epsilon} \beta(R,M,\vecx,\vecx') \geq 1.
$$

As noted in \cite{Kairouz15}, this operational perspective shows that it is impossible to  force type I and type II error rates to simultaneously vanish for a DP algorithm, and the reverse implication also holds. 

\paragraph{For BCDP:} 
Now, consider a $\bm\delta$-BCDP mechanism $M$. Suppose an adversary has knowledge of the prior $\pi$ and again observes the algorithm’s output $Y$. For any $i \in [d]$, the adversary aims to distinguish whether the output was generated from data where the $i$-th coordinate belongs to the set $S_i$ or $S_i’$. In other words, we consider the following binary hypothesis testing problem:
\begin{align*}
     H_0^i &:  Y \sim P\{\,\cdot\,|x_i \in S_i\},\\
     H_1^i &: Y \sim P\{\,\cdot\,|x_i \in S_i'\}.
\end{align*}
Let the rejection region for the $i$-th hypothesis test be $R_i \in \cF_{\cY}$.
For the $i$-th test, denote the type I and type II error rates as $\alpha(R_i,M,S_i,S_i')$ and $\beta(R_i,M,S_i,S_i')$.
Then, the following result holds:
\begin{restatable}{theorem}{thmHT}\label{theorem:HT}
A pair $(\pi,M)$ is $\bm\delta$-BCDP iff the following condition holds  for every coordinate $i \in [d]$:  
\begin{align}\label{eq:20-bcdp}
    e^{\delta_i}\alpha(R_i,M,S_i,S_i') + \beta(R_i,M,S_i,S_i') &\geq 1,
\end{align}
 for any $ S_i,S_i' \in \cF_{\cX_i}^+$ and $ R_i \in \cF_{\cY}$.
\end{restatable}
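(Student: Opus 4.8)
The plan is to reduce \cref{theorem:HT} to exactly the elementary rearrangement that underlies the LDP statement in \cref{lemma:HT-DP}, applied one coordinate at a time to the pair of \emph{conditional} laws appearing in the two hypotheses. Fix a coordinate $i \in [d]$ and admissible sets $S_i, S_i' \in \cF_{\cX_i}^+$, so that both conditional distributions in \eqref{eq:P-i-def} are well defined. Under $H_0^i$ the output has law $P\{\,\cdot\,|x_i \in S_i\}$ and under $H_1^i$ it has law $P\{\,\cdot\,|x_i \in S_i'\}$. With rejection region $R_i$, I would first record the two error rates explicitly:
\begin{equation*}
\alpha(R_i,M,S_i,S_i') = P\{M(\vecx) \in R_i \mid x_i \in S_i\}, \qquad \beta(R_i,M,S_i,S_i') = 1 - P\{M(\vecx) \in R_i \mid x_i \in S_i'\}.
\end{equation*}

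Second, I would perform the algebraic rearrangement. Substituting the two expressions above, the inequality \eqref{eq:20-bcdp} for the ordered pair $(S_i,S_i')$ reads $e^{\delta_i}\,P\{M(\vecx)\in R_i \mid x_i\in S_i\} + 1 - P\{M(\vecx)\in R_i \mid x_i\in S_i'\} \ge 1$, which is equivalent to
\begin{equation*}
P\{M(\vecx)\in R_i \mid x_i\in S_i'\} \ \le\ e^{\delta_i}\, P\{M(\vecx)\in R_i \mid x_i\in S_i\}.
\end{equation*}
This is precisely the BCDP inequality \eqref{eqn:def_BCDP} for coordinate $i$ evaluated at the \emph{swapped} ordered pair $(S_i',S_i)$ together with the set $R_i$.

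Third, I would close the loop using the quantifiers. Both \cref{def:BCDP} and the condition \eqref{eq:20-bcdp} are asserted for \emph{all} ordered pairs in $\cF_{\cX_i}^+ \times \cF_{\cX_i}^+$ and all $R_i \in \cF_{\cY}$. Since the rearranged testing inequality for $(S_i,S_i')$ coincides with the defining BCDP inequality for $(S_i',S_i)$, ranging over all ordered pairs makes the two families of inequalities identical. Hence, for each fixed $i$, condition \eqref{eq:20-bcdp} holds for all $S_i,S_i',R_i$ if and only if \eqref{eqn:def_BCDP} holds for all $S_i,S_i',R_i$; quantifying over $i \in [d]$ then yields the stated equivalence. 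The companion normalization with $e^{\delta_i}$ multiplying $\beta$ (the variant of \eqref{eq:TypeI-II} noted after \cref{lemma:HT-DP}) follows by the same rearrangement, so either form may be used.

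Because the argument is a one-line rearrangement once the error rates are identified, I do not anticipate a genuine obstacle. The only points requiring care are bookkeeping: ensuring the conditionals are well defined, which is guaranteed by restricting to $\cF_{\cX_i}^+$ as in \eqref{eq:P-i-def}, and correctly exploiting the symmetry of the quantification over ordered pairs so that no separate symmetry assumption on the roles of $S_i$ and $S_i'$ is needed.
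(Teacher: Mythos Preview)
Your proposal is correct and follows essentially the same approach as the paper: both identify $\alpha$ and $\beta$ with the relevant conditional probabilities and obtain the equivalence via a one-line rearrangement together with the symmetry of the quantification over ordered pairs $(S_i,S_i')$. The only cosmetic difference is that the paper derives each implication separately (using $T_i = R_i^c$ for one form and swapping $S_i,S_i'$ with $T_i = R_i$ for the other), whereas you collapse both directions into a single equivalence; the content is identical.
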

We note that the criteria \eqref{eq:20-bcdp} can be equivalently replaced by the criteria
$$\alpha(R_i,M,S_i,S_i') + e^{\delta_i}\beta(R_i,M,S_i,S_i') \geq 1.$$
The proof of \cref{theorem:HT} is presented in \cref{sec:HT-proof}.
Thus, the theorem shows that the vector $\bm\delta$ in the $\bm\delta$-BCDP formulation captures the privacy loss for each of the coordinates in the same way $\epsilon$ in $\epsilon$-DP captures the privacy loss of the entire data.
In contrast, the vector $\vecc$ in $\vecc$-CDP does not have such an interpretation of privacy loss across the coordinates.

\begin{figure}
    \centering
\begin{tikzpicture}[
  node distance=3cm and 4cm,
  box/.style={draw, minimum width=2cm, minimum height=1cm, align=center},
  arrow/.style={-{Latex[length=3mm, width=2mm]}}
]

\node[box] (DP) {$\epsilon_L$-LDP};
\node[box, right=of DP] (BDP) {$\epsilon_B$-BDP};
\node[box, below=of DP] (CDP) {$\bm c$-CDP};
\node[box, right=of CDP] (BCDP) {$\bm\delta$-BCDP};

\draw[arrow] (DP) -- node[above] {$\epsilon_L = \epsilon_B$} (BDP);
\draw[arrow] (BDP) -- (DP);
\draw[arrow] (BDP)  -- node[right] {$\bm\delta \preceq \epsilon_B \bm1$ } (BCDP);
\draw[arrow] (DP.225) -- node[left] { $\bm c \preceq \epsilon_L \bm1$} (CDP.135);
\draw[arrow] (CDP) -- node[above] { $\bm \delta \preceq \text{func. of }(\bm c, \pi)$} (BCDP);
\draw[arrow] (CDP.45)  -- node[right] { $\epsilon_L \leq  \sum_i c_i$} (DP.315);
\end{tikzpicture}
    \caption{General relation of LDP (\cref{definition:LDP}), BDP (\cref{def:pi-DP}), CDP (\cref{def:coordinate-dp}) and BCDP (\cref{def:BCDP}). 
    The condition $\veca \preceq \vecb$ is to be read as $a_i \leq b_i \ \forall i$.
    The implication arrows and the described transformation of the parameters are to be interpreted as sufficient condition. 
    For example, an $\epsilon_L$-DP mechanism is guaranteed to be $\vecc$-CDP for $\vecc \preceq \epsilon_L \bm1$.  
    The function that translates $\vecc$-CDP under the prior $\pi$ to BCDP is presented in the \cref{prop:cdp-to-bdp}.} 
    \label{fig:DP-relations}
\end{figure}

\subsection{BCDP can be achieved via LDP}
We begin by a simple result that follows directly from $\epsilon$-BDP definition.
If a mechanism is $\epsilon$-LDP, then it is also $\epsilon$-BDP.
Noticing that $\epsilon$-BDP ensures a ratio of $e^{\epsilon}$ for each coordinate as well in \eqref{eqn:def_BCDP}, we see that each coordinate has a privacy level of $\epsilon$ as well.
\cref{proposition:LDP_to_BCDP} formalizes this observation and the proof can be found in \cref{sec:ldp-to-bcdp-proof}.

\begin{restatable}{proposition}{propLdpToBcdp}{\normalfont{(LDP to BCDP)}}
\label{proposition:LDP_to_BCDP}
An $\varepsilon$-LDP mechanism satisfies $\varepsilon\bm1$-\bdp.
\end{restatable}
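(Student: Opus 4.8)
The plan is to chain together two facts already established in the excerpt. First, Proposition~\ref{lemma:LDP_Bayesian} tells us that an $\varepsilon$-LDP mechanism $M$ makes the pair $(\pi,M)$ into an $\varepsilon$-BDP pair, for \emph{any} prior $\pi$. So the task reduces to showing that $\varepsilon$-BDP implies $\varepsilon\bm1$-\bdp, i.e.\ that the global Bayesian bound of \cref{def:pi-DP} specializes to a per-coordinate bound of the form \eqref{eqn:def_BCDP} with $\delta_i = \varepsilon$ for every $i$.

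The key observation is that the per-coordinate conditioning events used in BCDP are themselves particular instances of the conditioning sets used in BDP. Concretely, for a fixed coordinate $i$ and sets $S_i, S_i' \in \cF_{\cX_i}^+$, I would take the ``cylinder'' sets $S := S_i \times \cX_{-i}$ and $S' := S_i' \times \cX_{-i}$. By definition of $\cF_{\cX_i}^+$ we have $\pi(S) = \pi\{x_i \in S_i\} > 0$ and likewise $\pi(S') > 0$, so both $S, S' \in \cF_{\cX}^+$ and the BDP inequality \eqref{eq:D-def-BDP} applies to them. Moreover, the identity \eqref{eq:P-i-def} in the excerpt shows precisely that $P\{M(\vecx) \in R \mid x_i \in S_i\} = P\{M(\vecx) \in R \mid \vecx \in S\}$ (and similarly for $S_i'$), since $P((S_i \times \cX_{-i}) \times R)/\pi(S_i \times \cX_{-i})$ is exactly the conditional probability given the cylinder event $\{\vecx \in S\}$.

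Assembling these pieces: starting from the $\varepsilon$-BDP inequality applied to the cylinder sets $S$ and $S'$, I rewrite both conditional probabilities back into their coordinate-conditioned form via \eqref{eq:P-i-def}, yielding $P\{M(\vecx)\in R \mid x_i \in S_i\} \le e^{\varepsilon}\, P\{M(\vecx)\in R \mid x_i \in S_i'\}$ for all $R \in \cF_{\cY}$. Since $i$, $S_i$, $S_i'$ were arbitrary, this is exactly the $\bm\delta$-BCDP condition of \cref{def:BCDP} with $\delta_i = \varepsilon$ for each $i \in [d]$, i.e.\ $\bm\delta = \varepsilon\bm1$.

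I do not anticipate a genuine obstacle here, as the statement is essentially a definitional unpacking once Proposition~\ref{lemma:LDP_Bayesian} is invoked. The one point requiring mild care is the measure-theoretic bookkeeping: confirming that the cylinder sets $S_i \times \cX_{-i}$ indeed lie in the product $\sigma$-algebra $\cF_{\cX}$ (they do, being measurable rectangles) and that the positivity condition $S_i \in \cF_{\cX_i}^+$ correctly transfers to $S \in \cF_{\cX}^+$ so that conditioning is well-defined. This is routine given the product structure $\cF_{\cX} = \bigtimes_{i=1}^d \cF_{\cX_i}$ assumed in \cref{sec:formulation}.
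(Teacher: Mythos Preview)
Your proposal is correct and follows essentially the same approach as the paper's own proof: invoke \cref{lemma:LDP_Bayesian} to pass from $\varepsilon$-LDP to $\varepsilon$-BDP, then specialize the BDP inequality \eqref{eq:D-def-BDP} to cylinder sets $S = S_i \times \cX_{-i}$ and $S' = S_i' \times \cX_{-i}$, using \eqref{eq:P-i-def} to identify the resulting conditional probabilities with the coordinate-conditioned ones in \cref{def:BCDP}. The paper's proof is slightly terser but the logic is identical.
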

Thus, a baseline approach to implement $\bm\delta$-\bdp is to use a $(\min_i \delta_i)$-LDP mechanism. Nevertheless, this would mean that we provide the most conservative privacy guarantee required for one coordinate to all coordinates. However, as we will show, we can maintain an $\epsilon > \min \bm{\delta}$ for the overall privacy guarantee while providing the $\bm{\delta}$-\bdp guarantee, therefore achieving lower error for downstream tasks.

\subsection{BCDP does not imply LDP}
It should be noted that BCDP does not ensure LDP, BDP, or CDP.
Example~\ref{ex:bcdp-ldp} below demonstrates a mechanism that is BCDP, but  not  LDP.

\begin{example}
\label{ex:bcdp-ldp}
Consider data $\vecx = (x_1,x_2)$ where $x_1, x_2$ are i.i.d.~Bern$(\frac{1}{2})$, and let $\mathcal{Y} = \{0,1\}$.
Let $M$ be defined by the table below, with parameters $a,b,c \in (0,1]$.
Mechanism $M$ is not LDP (and hence, neither BDP nor CDP).
However, for BCDP, the terms $\Pr\{M(\vecx) = y|x_i = 0 \}$ and $\Pr\{M(\vecx) = y|x_i = 1 \}$ have a finite multiplicative ranges in $y=0,1$, for both  $i=1,2$.
For instance, if $a=b=c=1/2$, the resulting $M$ is  $(\log(2),\log(2))$-BCDP.
\renewcommand{\arraystretch}{1.8}
\begin{table}[H]
\centering
\begin{tabular}{|cc|cc|}
\hline
\multicolumn{2}{|c|}{\multirow{2}{*}{$P\{M(\vecx)=1|\vecx\}$}}      & \multicolumn{2}{c|}{$x_1$}                                 \\ \cline{3-4} 
\multicolumn{2}{|c|}{}                       & \multicolumn{1}{c|}{0}                   & 1           \\ \hline
\multicolumn{1}{|c|}{\multirow{2}{*}{$x_2$}} & 0 & \multicolumn{1}{c|}{$a$} & $b$ \\ \cline{2-4} 
\multicolumn{1}{|c|}{}                   & 1 & \multicolumn{1}{c|}{$0$}         & $c$ \\ \hline
\end{tabular}
\end{table}
\end{example}
The takeaway from the above example is that there can be priors and mechanisms such that the prior-mechanism pair is $\bm\delta$-\bdp but not $\varepsilon$-DP for any value of $\varepsilon < \infty$. 
Intuitively, BCDP quantifies privacy loss for each coordinate.
In \cref{ex:bcdp-ldp}, one cannot distinguish much about any particular coordinate when the output is observed due to the BCDP guarantee.
However, one may still be able to make conclusions over the coordinates jointly.
For instance,  if the mechanism output in Example \ref{ex:bcdp-ldp} is $1$, then the input could not have been $\vecx =(0,1)$.

As this result suggests, just ensuring \bdp is not sufficient to guarantee overall privacy. 
In other words, the right way to think of \bdp is that it is a tool that allows us to quantify the privacy offered by an algorithm for each coordinate. The user can demand overall $\epsilon$-LDP along with $\bm{\delta}$-\bdp where the value $\delta_i$ could be significantly smaller than $\epsilon$ for those coordinates that the user feels more concerned about their privacy. 

\subsection{Achieving BCDP via CDP}\label{sec:CDPTOBCDP}
As we discussed earlier, CDP does not imply the \bdp, as it does not take into account the correlation among the features. Therefore, a natural question arises: could CDP, along with some bound on the dependence among the features, lead to a bound with respect to the \bdp definition? The next result provides an answer to this question. We denote the total variation distance between two distributions $P$ and $Q$ as $\mathsf{TV}(P,Q)$.

\begin{restatable}{proposition}{propCdpToBdp}{\normalfont{(CDP to BCDP)}} \label{prop:cdp-to-bdp}
Suppose mechanism $M$ is $\bm{c}$-CDP
and there exists $q_1,\ldots,q_d$ such that
\begin{equation} \label{eq:tv-bound}
    \mathsf{TV}(\pi_{\vecx_{-i}|x_i \in S_i},\pi_{\vecx_{-i}|x_i \in S_i'}) \leq q_i \ \forall \ S_i,S_i' \in \cF_{\cX_i}^+.
\end{equation}
Then: 
\begin{enumerate} 
\item The mechanism is $\sum_{i=1}^n c_i$-LDP and $\bm\delta$-\bdp with $\delta_i = c_i + \log (1 + q_i e^{\sum_{j\neq i} c_j} - q_i)$. 
\item In case that the mechanism is $\varepsilon$-LDP for some $\varepsilon \geq 0$, then it is $\bm\delta'$-\bdp with $\delta'_i = \min \{c_i + \log (1 + q_i e^{\varepsilon} - q_i), \epsilon\}$.    
\end{enumerate} 
\end{restatable}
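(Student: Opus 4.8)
The plan is to reduce the BCDP inequality \eqref{eqn:def_BCDP} to two pointwise consequences of $\bm c$-CDP and then convert the additive slack coming from the total-variation hypothesis \eqref{eq:tv-bound} into the exact multiplicative constant. Fix a coordinate $i$ and sets $S_i,S_i'\in\cF_{\cX_i}^+$, and write $\vecx=(x_i,\vecz)$ with $\vecz=\vecx_{-i}$. Using a regular conditional distribution (which exists since the spaces are Polish), define the conditional hit probability given $\vecx_{-i}=\vecz$,
$$h_S(\vecz)=\int \mu_{(s,\vecz)}(R)\, d\pi_{x_i\mid \vecx_{-i}=\vecz,\, x_i\in S_i}(s),$$
and $h_{S'}$ analogously. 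Writing $\nu=\pi_{\vecx_{-i}\mid x_i\in S_i}$ and $\nu'=\pi_{\vecx_{-i}\mid x_i\in S_i'}$ for the two conditional marginals of $\vecx_{-i}$, identity \eqref{eq:P-i-def} gives $A:=P\{M(\vecx)\in R\mid x_i\in S_i\}=\int h_S\, d\nu$ and $B:=P\{M(\vecx)\in R\mid x_i\in S_i'\}=\int h_{S'}\, d\nu'$, while \eqref{eq:tv-bound} reads $\mathsf{TV}(\nu,\nu')\le q_i$. I would first dispatch the $\big(\sum_i c_i\big)$-LDP claim by the standard group-privacy telescoping: transform $\vecx$ into $\vecx'$ one coordinate at a time, each step costing a factor $e^{c_j}$ by $\bm c$-CDP.

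Next I would extract two pointwise inequalities. Averaging the single-coordinate bound $\mu_{(s,\vecz)}(R)\le e^{c_i}\mu_{(s',\vecz)}(R)$ over $s\sim\pi_{x_i\mid\vecz,S_i}$ and $s'\sim\pi_{x_i\mid\vecz,S_i'}$ yields the tight same-$\vecz$ bound $h_S(\vecz)\le e^{c_i}h_{S'}(\vecz)$. Chaining the coordinate-$i$ bound with the group-privacy factor $e^{\sum_{j\neq i}c_j}$ for changing all coordinates except $i$ (telescoping over $j\neq i$) gives, after averaging over the respective conditional laws of $x_i$, the cross bound $h_S(\vecz)\le e^{\sum_j c_j}h_{S'}(\vecz')$ for all $\vecz,\vecz'$; in particular $h_S(\vecz)\le e^{\sum_j c_j}\,m$ where $m:=\operatorname{ess\,inf}h_{S'}$, and trivially $m\le B$.

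The crux, and the step I expect to be the main obstacle, is matching the constant $e^{\delta_i}=(1-q_i)e^{c_i}+q_i e^{\sum_j c_j}$ exactly. I would decompose $\nu=\rho+\mu_+$ and $\nu'=\rho+\mu_-$, where $\rho=\nu\wedge\nu'$ and $\mu_\pm$ are the positive/negative parts of $\nu-\nu'$, so $\|\mu_+\|=\|\mu_-\|=t:=\mathsf{TV}(\nu,\nu')\le q_i$. Then
$$A=\int h_S\, d\rho+\int h_S\, d\mu_+ \le e^{c_i}\int h_{S'}\, d\rho + e^{\sum_j c_j}\,m\,t,$$
whereas $B=\int h_{S'}\, d\rho+\int h_{S'}\, d\mu_-\ge \int h_{S'}\, d\rho+m\,t$, giving $\int h_{S'}\, d\rho\le B-mt$. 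Substituting and using $m\le B$, $t\le q_i$, and $e^{\sum_j c_j}\ge e^{c_i}$ yields $A\le e^{c_i}B+mt\big(e^{\sum_j c_j}-e^{c_i}\big)\le e^{\delta_i}B$, which is exactly \eqref{eqn:def_BCDP}. The essential point is that one must \emph{retain} the negative-part contribution $\int h_{S'}\, d\mu_-$ rather than discarding it: the naive estimate $\int h_S\, d(\nu-\nu')\le q_i\sup h_S$ loses a factor $q_i e^{c_i}$ and only gives the weaker constant $e^{c_i}+q_i e^{\sum_j c_j}$. For Part~2, I would re-run the cross-bound derivation with the group factor $e^{\sum_{j\neq i}c_j}$ replaced by the LDP factor $e^{\varepsilon}$ (since $\varepsilon$-LDP bounds the effect of changing $\vecx_{-i}$ with $x_i$ held fixed), producing $\delta_i=c_i+\log(1+q_i e^{\varepsilon}-q_i)$, and then intersect this with the bound $\delta_i\le\varepsilon$ furnished by \cref{proposition:LDP_to_BCDP} to obtain the stated minimum. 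The residual care is measure-theoretic, namely existence of the regular conditional distributions and validity of the pointwise inequalities $\rho$-almost everywhere, which is routine in the Polish setting.
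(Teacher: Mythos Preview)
Your argument is correct and reaches the exact constant, but the route differs from the paper's in two places, and the paper's version is a bit lighter on machinery. First, instead of your conditional averages $h_S(\vecz),h_{S'}(\vecz)$ (which need a regular conditional law of $x_i$ given $\vecx_{-i}$), the paper works with the pointwise envelopes $l(\vecz)=\inf_{x_i}\mu_{(x_i,\vecz)}(R)$ and $u(\vecz)=\sup_{x_i}\mu_{(x_i,\vecz)}(R)$; then $\bm c$-CDP gives $u\le e^{c_i}l$ directly, and the numerator/denominator become $\int u\,d\nu$ and $\int l\,d\nu'$ with no conditioning on $\vecx_{-i}$ needed. Second, instead of the Hahn decomposition $\nu=\rho+\mu_+$, $\nu'=\rho+\mu_-$ and the careful bookkeeping with $m=\operatorname{ess\,inf}h_{S'}$, the paper uses the one-line inequality
\[
\frac{\int l\,d\nu}{\int l\,d\nu'}\;\le\;1+\mathsf{TV}(\nu,\nu')\cdot\frac{\max l-\min l}{\min l},
\]
and then bounds $\max l/\min l$ by $e^{\sum_{j\ne i}c_j}$ (Part~1) or by $e^{\varepsilon}$ (Part~2). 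This is essentially the same ``retain the negative part'' idea you flagged, packaged as a single ratio estimate. What your approach buys is a decomposition that makes the role of the common mass $\rho$ explicit and, in fact, could yield a slightly sharper Part~2 constant if you used the full $\varepsilon$-LDP cross bound $h_S(\vecz)\le e^{\varepsilon}m$ rather than $e^{c_i+\varepsilon}m$; the paper's route avoids regular conditional distributions and the measure decomposition altogether, so it needs no Polish assumption for this step.
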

For the proof, see Appendix~\ref{sec:prop:cdp-to-bdp}. 
To better highlight this result, consider a simple example in which that the input's $d$ coordinates are redundant copies of the same value and we have access to a $\vecc$-CDP mechanism $M$. Then, as \cref{prop:cdp-to-bdp} suggests, and given that we have $q_i=1$ here, this mechanism is $\bm{\delta}$-BCDP with $\delta_i = \sum_{j} c_j$. This is, of course, intuitive, as all coordinates reveal information about the same value. It also reinforces that the BCDP is sensitive to correlations, unlike CDP.
The first part of \cref{prop:cdp-to-bdp} provides an immediate method for constructing a BCDP mechanism using mechanisms that offer CDP guarantees, such as applying off-the-shelf LDP mechanisms per coordinate. This can also be done without requiring full knowledge of the prior.
The second part of \cref{prop:cdp-to-bdp} suggests that in cases where we design an algorithm that has a tighter LDP guarantee than just $\sum_{i=1}^n c_i$, we can further improve the BCDP bound. One example of such a construction is provided in our private mean estimation algorithm in \cref{sec:mean}.
One challenge with this approach is translating the constraint $\bm\delta$ to a vector $\bm c$. In \cref{sec:proof:lemma:CDP_to_BCP_approximation}, we provide a tractable approach to do so by imposing a linear constraint on $\vecc$.

\subsection{Do Composition and Post-Prossesing Hold for BCDP?}
Composition is a widely used and well-known property of LDP. Nonetheless, as the following example highlights, the composition, as traditionally applied in differential privacy settings, does not hold in the BCDP setting.
\begin{example}
Let $\vecx \in \mathbb{R}^3$ be a Bernoulli vector where coordinates' distributions are independent and each one is $\text{Ber}(1/2)$. Consider the mechanism 
\begin{equation}
M(\vecx) = \begin{cases}
[x_1 \oplus x_2, x_3]^\top & \text{ with probability } \frac{1}{2} \\
[x_2, x_1 \oplus x_3]^\top & \text{ with probability } \frac{1}{2},
\end{cases}    
\end{equation}
where $\oplus$ denotes the sum in $\mathbb{Z}_2$. It is straightforward to verify that $M(\cdot)$ is in fact $\delta_1=0$-\bdp with respect to the first coordinate. However, having two independent copies of $M(\cdot)$ is not \bdp with respect to the first coordinate. To see this, consider the case where $x_1=1$ and, in two copies of $M(\vecx)$, one adds $x_1$ to $x_2$ and the other adds it to $x_3$. 
By observing the two copies, we can determine $x_1=1$.
\end{example}
On the other hand, the post-processing property holds for BCDP (see Appendix~\ref{sec:PostProc} for the proof).
\begin{restatable}{proposition}{lemPostProc}{\normalfont{(Post processing for BCDP)}}
\label{lemma:bdp_data_processing}
Let $M(\cdot)$ be a $\bm \delta$-\bdp mechanism and $K:\mathcal{Y} \to \mathcal{Z}$ be a measurable function. Then, $K\circ M$ is also $\bm \delta$-\bdp.  
\end{restatable}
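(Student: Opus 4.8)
The plan is to establish the post-processing property directly from the BCDP definition by expressing the output distribution of $K \circ M$ as a pushforward of the output distribution of $M$ under the measurable map $K$. The key observation is that BCDP is a statement about conditional output distributions $P\{M(\vecx) \in R \mid x_i \in S_i\}$, and post-processing merely composes these with a fixed measurable function that is independent of the data. First I would fix a coordinate $i \in [d]$, a set $T \in \cF_{\cZ}$, and sets $S_i, S_i' \in \cF_{\cX_i}^+$. The goal is to show
\begin{equation*}
P\{(K \circ M)(\vecx) \in T \mid x_i \in S_i\} \leq e^{\delta_i} P\{(K \circ M)(\vecx) \in T \mid x_i \in S_i'\}.
\end{equation*}

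The central step is to rewrite the event $\{(K \circ M)(\vecx) \in T\}$ as $\{M(\vecx) \in K^{-1}(T)\}$. Since $K$ is measurable, the preimage $R := K^{-1}(T)$ lies in $\cF_{\cY}$, so it is a valid rejection/measurement set for the original mechanism $M$. I would then apply the $\bm\delta$-BCDP guarantee of $M$ to this particular $R$, yielding
\begin{equation*}
P\{M(\vecx) \in K^{-1}(T) \mid x_i \in S_i\} \leq e^{\delta_i} P\{M(\vecx) \in K^{-1}(T) \mid x_i \in S_i'\},
\end{equation*}
and substituting back the identity $\{(K \circ M)(\vecx) \in T\} = \{M(\vecx) \in K^{-1}(T)\}$ completes the bound for this coordinate. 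Since $i$, $T$, $S_i$, and $S_i'$ were arbitrary, this holds for all coordinates and all admissible sets, which is exactly the $\bm\delta$-BCDP condition for $K \circ M$.

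The main obstacle — though it is more a matter of care than genuine difficulty — is handling the measure-theoretic bookkeeping cleanly. Specifically, I would need to verify that the conditional probabilities for $K \circ M$ are well-defined on the same conditioning events $\{x_i \in S_i\}$ with $S_i \in \cF_{\cX_i}^+$, which they are, since conditioning is on the input coordinate and is unaffected by the post-processing applied to the output. One should also confirm that $K \circ M$ is itself a legitimate randomized mechanism from $\cX$ to $\cZ$, i.e., that its Markov kernel is obtained by pushing $\mu_{\vecx}$ forward through $K$; this is standard but worth stating so that the BCDP definition applies to $K \circ M$ in the first place. No new analytic estimates are required — the argument is essentially that measurability of $K$ transfers any measurement set on $\cZ$ back to a measurement set on $\cY$, and the BCDP inequality is quantified over all such sets.

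If $K$ is itself randomized rather than a deterministic measurable function, the same argument goes through by an averaging step: one writes the kernel of $K \circ M$ as the composition of the kernels and uses the fact that a convex combination (integral) of inequalities of the form $a(R) \leq e^{\delta_i} b(R)$ preserves the inequality. However, since the statement as given takes $K$ to be a measurable function, the deterministic preimage argument above suffices, and I would present that as the core proof.
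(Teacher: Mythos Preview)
Your proposal is correct and follows essentially the same approach as the paper: define $R := K^{-1}(T) \in \cF_{\cY}$ by measurability of $K$, rewrite $\{(K\circ M)(\vecx)\in T\} = \{M(\vecx)\in R\}$, and apply the BCDP inequality for $M$ at this $R$. The paper's proof is exactly this preimage argument, just written more tersely and without your additional remarks on the randomized-$K$ extension.
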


\section{Private Mean Estimation} \label{sec:mean}


\floatname{algorithm}{Mechanism}
\begin{algorithm}[!t]
   \caption{Proposed locally private mechanism $M_{\mathsf{mean}}(\vecx,\vecc)$}
   \label{alg:local-channels}
\begin{algorithmic}
   \STATE {\bfseries Input:} user data $\vecx \in [-1,1]^d$ and  non-decreasing vector $\vecc$.
   \STATE $c_0 \gets 0$
   \STATE For $k \in [d]$, set $w_k =  \frac{(c_k-c_{k-1})^2}{(d-k+1)}$
   \FOR{$k \in [d]$}{} 
   \STATE $\vecY^{k} \gets M_{\mathsf{LDP}}(\vecx_{k:d}, c_k - c_{k-1}, \sqrt{d-k+1}) \in \bbR^{d-k+1}$
   \ENDFOR
   \STATE  $\hat{\nu}_i \gets  (\sum_{k=1}^i  Y_{i+1-k}^k w_k)/(\sum_{k=1}^i w_k)$ for $i \in [d]$
   \STATE Return $\hat{\bm{\nu}}$
\end{algorithmic}
\end{algorithm}

\begin{figure}[!t]
    \centering
    \begin{tikzpicture}
    \node (x1) at (0, 4) {$x_1$};
    \node (x2) at (1.5, 4) {$x_2$};
    \node (dots) at (3, 4) {$\dots$};
    \node (xd1) at (4.5, 4) {$x_{d-1}$};
    \node (xd) at (6, 4) {$x_d$};

    \draw (6.5, 3.9) -- (6.5, 3.75) -- (5.5, 3.75) -- (5.5, 3.9);
    \draw[->] (6, 3.75) -- (6, 1.5) node[pos=0.7,right,red] {$c_d - c_{d-1}$};
    \draw (6.5, 1.35) -- (6.5, 1.5) -- (5.5, 1.5) -- (5.5, 1.35);
    
    \node[violet] (y1_d) at (6, 1.15) {$Y_1^{d}$};

    \draw (6.5, 3.65) -- (6.5, 3.5) -- (4, 3.5) -- (4, 3.9);
    \draw[->] (5.25, 3.5) -- (5.25, 0.75) node[midway,left,red] {$c_{d-1}-c_{d-2}$};
    \draw (6.5, 0.6) -- (6.5, 0.75) -- (3.75, 0.75) -- (3.75, 0.6);
    
    \node[violet] (y1_d1) at (4.5, 0.4) {$Y_1^{d-1}$};
    \node[violet] (y2_d1) at (6, 0.4) {$Y_2^{d-1}$};

    \node at (4.5, -0.1) {$\vdots$};
    \node at (4.5, 3.3) {$\vdots$};

    \draw (6.5, 3) -- (6.5, 2.85) -- (-0.5, 2.85) -- (-0.5, 3.9);
    \draw[->] (3, 2.85) -- (3, -0.6) node[midway,left,red] {$c_{1}-c_{0}$};
    \draw (6.5, -0.75) -- (6.5, -0.6) -- (-0.5, -0.6) -- (-0.5, -0.75);

    \node[violet] (y1_1) at (0, -0.95) {$Y_1^{1}$};
    \node[violet] (y2_1) at (1.5, -0.95) {$Y_2^{1}$};
    \node (dots) at (3, -0.95) {$\dots$};
    \node[violet] (yd1_1) at (4.5, -0.95) {$Y_{d-1}^{1}$};
    \node[violet] (yd_1) at (6, -0.95) {$Y_d^{1}$};




\end{tikzpicture}

    \caption{The figure illustrates how the mechanism $M_{\mathsf{mean}}$ in Mechanism \ref{alg:local-channels} obtains the estimate $\bm\hat{\nu}$. For example, the vector \violet{$Y^k$} is obtained by taking the vector $(x_{k},x_{k+1},\ldots,x_{d})$ and using the LDP channel $M_{\mathsf{LDP}}$ with privacy parameter \red{$c_k - c_{k-1}$}.
    $\hat{\nu}_i$ is obtained by taking a linear combination of the component of \violet{$\{Y^k\}_{k \in [d]}$} corresponding to $x_i$, i.e., directly below $x_i$ in the figure.}
    \label{fig:algorithm-scheme}
\end{figure}
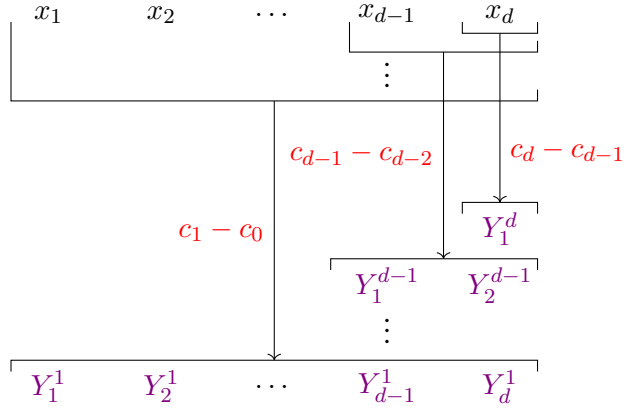

\floatname{algorithm}{Mechanism}
    \begin{algorithm}[t]
   \caption{Local DP channel for data in $\cB_2(r)$ proposed by \citet{Duchi13}}
   \label{alg:duchi-mldp}
\begin{algorithmic}
   \STATE {\bfseries Input:} user data $\vecv$, Local-DP level $\alpha \geq 0$, and bound $r$ such that $\vecv \in \cB_2(r)$. 
   \STATE $d \gets $dimension($\vecv$)
   \STATE $B \gets \sqrt{\pi}dr \frac{e^{\alpha} + 1}{e^{\alpha} - 1}\frac{ \Gamma(\frac{d+1}{2})}{\Gamma(\frac{d}{2}+1)}$
   \STATE $K \sim $ Bern$(\frac{e^{\alpha}}{e^{\alpha} + 1})$
   \STATE $S \sim $ Bern$(\frac{1}{2} + \frac{\|\vecv\|_2}{2r})$
   \vspace{4pt}\STATE $\tilde{\vecv} \gets (2S-1) \frac{r\vecv}{\|\vecv\|_2}$
   \STATE Return $Z \sim $ Uniform$(\vecz \in \bbR^d: (2K-1)\vecz^T \tilde{\vecv} > 0, \|\vecz\|_2 = B)$
\end{algorithmic}
\end{algorithm}


To demonstrate the practical use of the BCDP framework, we consider the private mean estimation problem.
\paragraph{Problem Setup:} Let there be $n$ users submitting their data to a central server via Local DP mechanisms. 
Let user $j$'s data  be $\vecx^{(j)} \in [-1,1]^d$, where $\vecx^{(j)}$ is drawn from prior $\pi^{(j)}$ on $[-1,1]^d$;
the server and users are not assumed to know the  priors (which may be different for each individual user).
Our goal is to estimate the empirical mean $\bar{\vecx} = \sum_{j=1}^n \frac{\vecx^{(j)}}{n} \in [-1,1]^d$ under $\varepsilon$-LDP and $\bm\delta$-\bdp.
We consider $l_2$-norm as our error metric. 
Without loss of generality, we assume that $\bm\delta$ is in non-decreasing order. 
We make the following assumption regarding the priors.
\begin{assumption} \label{assumption:bounded_TV}
For every $i \in [d-1]$ and every $j \in [n]$, we have 
 \begin{equation} \label{eq:tv-bound-universal}
\mathsf{TV}(\pi^{(j)}_{x_{-i}|x_i \in S_i},\pi^{(j)}_{x_{-i}|x_i \in S_i'}) \leq q \quad \forall S_i,S_i' \in \cF_{\cX_i}^+.
\end{equation}
\end{assumption}
Here $q$ can be interpreted as a measure of how much different coordinates are interdependent. For instance, in the case where coordinates are independent we have $q=0$, and in the case where they are fully equal, we have $q=1$. Note that we do not need \eqref{eq:tv-bound-universal} to hold for the last coordinate ($i=d$). While this slight relaxation does not play a role in this section, it will be useful when we move to the private optimization application.

It is also worth emphasizing that our analysis extends to cases where we have heterogeneous bounds across different coordinates (similar to $ \eqref{eq:tv-bound} $) and different across users. 
However, we opt for a universal bound for the sake of simplifying the presentation of results.
We assume that the upper bound $q$ is known to the designer of the local channels, which can be either the end user themselves or the central server.

Our proposed privacy mechanism $M_{\mathsf{mean}}$ (Mechanism \ref{alg:local-channels}) builds upon any LDP mechanism $M_{\mathsf{LDP}}$, capable of handling different dimensional inputs, that can be used for mean estimation as a black-box tool.
While \cref{alg:local-channels} satisfies the privacy constraints for any black-box LDP mechanism $M_{\mathsf{LDP}}$, we shall focus on the mechanism presented by \citet{Duchi13} for proving error bounds.
The mechanism of \citet{Duchi13}, which is known to be minimax optimal for mean estimation, is outlined in Mechanism \ref{alg:duchi-mldp} for completeness.
A figure explaining $M_{\mathsf{mean}}$ is presented in Figure~\ref{fig:algorithm-scheme}.

\cref{theorem:mean_estimation} shows that our proposed mechanism satisfies the desired privacy constraints and provides a bound on the mean-squared error (MSE).
The notation  $a \wedge b$ refers to $\min\{a,b\}$.
later in this section we present a refined result with a more interpretable MSE bound in \cref{cor:mse}.

\begin{restatable}{theorem}{thmMeanEstimation}\label{theorem:mean_estimation}
Suppose \cref{assumption:bounded_TV} holds and let $\tilde{\delta}_i:= \min\{\delta_i, \varepsilon\}$ for any $i \in [d]$. Then, $M_{\mathsf{mean}}$, i.e., Mechanism \ref{alg:local-channels}, with parameters
\begin{equation} \label{eq:c-values}
c_d = \min \left \{  \log(\frac{e^{\zeta \tilde{\delta}_1}+q-1}{q}), \tilde{\delta}_d \right \}, \, c_0 = 0, 
\end{equation}
\begin{equation*}
\forall i < d, \ c_{i} = \begin{cases}
    c_d  & \text{if }c_d \leq \tilde{\delta}_i,  \\
    \tilde{\delta}_i - \log(1+qe^{c_d}-q) & \text{otherwise},
\end{cases}
\end{equation*}
where $\zeta \in (0,1]$ is a free parameter, is $\bm\delta$-\bdp and $\varepsilon$-LDP. 
Moreover, using Mechanism \ref{alg:duchi-mldp} as $M_{\mathsf{LDP}}$, we obtain the following mean-squared error bound
\begin{equation} \label{eqn:MSE}
 \bbE \left[  \bigg \| \bar{\vecx}  - \Pi \bigg ( \frac{1}{n}\sum_{j=1}^n M_{\mathsf{mean}}(\vecx^{(j)},\vecc) \bigg)  \bigg \|_2^2  \bigg| \vecx^{(1)},\ldots,\vecx^{(n)} \right]   
\lesssim \frac{1}{n} \sum_{i=1}^d \left ( \frac{1}{\sum_{k=1}^i \frac{(c_k - c_{k-1})^2}{d-k+1}} \right ) \wedge d ,    
\end{equation}
where $\Pi(\cdot)$ denotes projection into $[-1,1]^d$ and
the expectation is taken over the randomness in $M_{\mathsf{mean}}(\cdot)$.
\end{restatable}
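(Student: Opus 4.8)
The plan is to prove the privacy claim and the mean-squared error bound separately, since the privacy of $M_{\mathsf{mean}}$ holds for an arbitrary black-box $M_{\mathsf{LDP}}$ whereas the error bound is specific to the channel of Mechanism~\ref{alg:duchi-mldp}.

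For privacy, the key observation is that $M_{\mathsf{mean}}$ is a parallel composition of $d$ independent queries, where query $k$ runs $M_{\mathsf{LDP}}$ on the suffix $\vecx_{k:d}$ with budget $c_k - c_{k-1}$. First I would note that coordinate $x_i$ is touched only by queries $k = 1,\dots,i$, so that by basic composition over these queries, changing $x_i$ while holding $\vecx_{-i}$ fixed alters the joint output distribution by at most $e^{\sum_{k=1}^i (c_k - c_{k-1})} = e^{c_i}$; hence $M_{\mathsf{mean}}$ is $\vecc$-CDP. Composing over all $d$ queries gives $e^{\sum_{k=1}^d (c_k - c_{k-1})} = e^{c_d}$, so the mechanism is $c_d$-LDP, and since \eqref{eq:c-values} forces $c_d \le \tilde\delta_d \le \varepsilon$, it is $\varepsilon$-LDP. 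Next I would feed this $\vecc$-CDP and $c_d$-LDP guarantee, together with the total-variation bound $q$ of \cref{assumption:bounded_TV}, into the second part of \cref{prop:cdp-to-bdp}, yielding a BCDP guarantee with parameter $\min\{c_i + \log(1 + q e^{c_d} - q),\, c_d\}$ for each coordinate. The remaining step is a case analysis showing the explicit $c_i$ of \eqref{eq:c-values} make this at most $\delta_i$: when $c_d \le \tilde\delta_i$ we have $c_i = c_d$ and the bound collapses to $c_d \le \tilde\delta_i \le \delta_i$, while when $c_d > \tilde\delta_i$ the choice $c_i = \tilde\delta_i - \log(1+qe^{c_d}-q)$ is engineered so that $c_i + \log(1+qe^{c_d}-q) = \tilde\delta_i \le \delta_i$. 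Along the way one must verify feasibility, namely that each $c_i \ge 0$ (which uses $\log(1+qe^{c_d}-q) \le \zeta\tilde\delta_1 \le \tilde\delta_i$, guaranteed by the cap on $c_d$ and $\zeta \le 1$) and that $\vecc$ is non-decreasing, so Mechanism~\ref{alg:local-channels} is well-defined.

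For the MSE bound I would rely on two properties of the Duchi channel: for $\Lt{\vecv} \le r$ the output is unbiased, $\mathbb{E}[M_{\mathsf{LDP}}(\vecv,\alpha,r)\mid \vecv]=\vecv$, and its per-coordinate variance is $O\big(r^2\coth^2(\alpha/2)\big)$. Instantiated at query $k$ with $r=\sqrt{d-k+1}$ and $\alpha = c_k-c_{k-1}$, this makes $Y^k_{i+1-k}$ an unbiased estimator of $x_i$ with variance $\sigma_k^2 \lesssim (d-k+1)\coth^2\!\big(\tfrac{c_k-c_{k-1}}{2}\big)$, which in the high-privacy regime scales like $1/w_k$ for the weights $w_k = (c_k-c_{k-1})^2/(d-k+1)$. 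Since $\hat\nu_i$ is the $w_k$-weighted average of the $i$ unbiased estimates $\{Y^k_{i+1-k}\}_{k=1}^i$, it is itself unbiased for $x_i$ and has variance $\big(\sum_{k\le i} w_k^2\sigma_k^2\big)/\big(\sum_{k\le i} w_k\big)^2$. I would show this is $\lesssim \big(\sum_{k=1}^i w_k\big)^{-1} \wedge d$: the first term is the inverse-variance rate, available because $w_k$ coincides up to constants with the optimal precision weight $1/\sigma_k^2$ in the relevant regime, while the $d$-cap follows from $x_i \in [-1,1]$ together with the final projection onto the bounded box.

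To finish, I would average over the $n$ users. Because the weights $w_k$ depend only on $\vecc$ and not on the (unknown, possibly distinct) priors, every user's estimator $\hat\nu_i^{(j)}$ shares the same variance, so the variance of $\tfrac1n\sum_j\hat\nu_i^{(j)}$ is $\tfrac1n$ times the single-user variance, producing the leading $\tfrac1n$ in \eqref{eqn:MSE}. The projection $\Pi$ onto $[-1,1]^d$ is coordinate-wise clipping and hence non-expansive; since $\bar{\vecx}\in[-1,1]^d$, projecting can only decrease each coordinate's squared error, so the variance bounds are preserved. Summing the per-coordinate bounds $\tfrac1n\big(\big(\sum_{k=1}^i w_k\big)^{-1}\wedge d\big)$ over $i\in[d]$ then gives \eqref{eqn:MSE}. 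I expect the main obstacle to be the variance computation for the combined Duchi estimates: carefully controlling $\sigma_k^2$ through the $\coth$ factor across both the high- and low-privacy regimes, verifying that the fixed weights $w_k$ reproduce the inverse-variance rate, and justifying the $\wedge d$ truncation. The privacy case analysis is more mechanical but still requires the feasibility checks ($c_i \ge 0$ and monotonicity of $\vecc$) to make the reduction through \cref{prop:cdp-to-bdp} rigorous.
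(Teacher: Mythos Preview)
Your proposal is correct and follows essentially the same route as the paper: establish $\vecc$-CDP and $c_d$-LDP from telescoping composition of the $d$ suffix queries, invoke the second part of \cref{prop:cdp-to-bdp} (together with \cref{proposition:LDP_to_BCDP} for the case $c_d\le\tilde\delta_i$) to get the BCDP guarantee, and for the MSE use unbiasedness and the per-coordinate variance bound of the Duchi channel to show the $w_k$-weighted average attains the inverse-variance rate, then average over users and sum over coordinates. One small bookkeeping point: the projection argument caps the \emph{per-coordinate} squared error at $O(1)$ (both $\bar x_i$ and the projected estimate lie in $[-1,1]$), so the $\wedge\, d$ in \eqref{eqn:MSE} arises only after summing these $O(1)$ caps over $i\in[d]$, not as a per-coordinate bound.
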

The proof is presented in Appendix~\ref{sec:proof:theorem:mean_estimation}. Here, we provide a proof sketch for the privacy guarantee. We first establish that $M_{\mathsf{mean}}$ is $\vecc$-CDP and $c_d$-LDP. The latter implies that $M_{\mathsf{mean}}$ also satisfies $c_d \bm1$-BCDP. Thus, if $\tilde\delta_i \geq c_d$, we have already shown that $M_{\mathsf{mean}}$ is $\delta_i$-BCDP with respect to the $i$-th coordinate. For the case where $c_d > \tilde\delta_i$, we use \cref{prop:cdp-to-bdp}, which essentially implies that $M_{\mathsf{mean}}$ is $c_i + \log(1+qe^{c_d}-q)$-BCDP with respect to the $i$-th coordinate. Substituting the value of $c_i$ from the statement of \cref{theorem:mean_estimation} gives us the desired $\delta_i$-BCDP guarantee for coordinate $i$. Finally, the choice of $c_d$ in \eqref{eq:c-values} ensures that the proposed $c_i$’s are all non-negative and valid. 

We next make a few remarks regarding this result. \textit{First,} note that while this bound applies to empirical mean estimation, when the data points $\{\vecx^{(j)}\}_{j=1}^n$ are independent and identically distributed, the result can also be used to derive an out-of-sample error bound for the statistical private mean estimation problem. In this case, an additional $\frac{d}{n}$ term would be added to the error bound. 
\textit{Secondly,} note that in the special case where there is no coordinate-level privacy requirement (i.e., $\delta_i = \infty$), setting $\zeta = 1$ implies $c_1 = \ldots = c_d = \epsilon$, which yields the error bound $\mathcal{O}({d^2}/(n \varepsilon^2))$. This matches the $\varepsilon$-LDP lower bound provided in \citet{Duchi13}.
\textit{Lastly,} we would like to discuss the role of parameter $\zeta$. 
Note that when the correlation parameter $q$ is sufficiently high or the most sensitive privacy requirement $\delta_1$ is low enough such that $c_d$ is determined by the first term, $\log((e^{\zeta \tilde{\delta}_1} + q - 1)/{q})$, in \eqref{eq:c-values}, one can verify that $c_1 = (1 - \zeta)\delta_1$. As $\zeta$ increases, $c_1$ decreases, leading to a worse estimate for the first coordinate. However, $c_d$ increases, indicating that, for the less-sensitive coordinates, we are increasing the privacy parameter, potentially resulting in lower estimation error for those coordinates. In other words, the parameter $\zeta$ allows us to control how we allocate the privacy budget of the most sensitive coordinate, $\delta_1$, between the estimation of the first coordinate and the privacy cost imposed by its correlation with other coordinates. 

We next present a corollary that studies a special case where the coordinates are divided into more sensitive and less sensitive groups. In this case, users requests a general $\epsilon$-LDP guarantee for all coordinates and a $\delta$-BCDP guarantee specifically for the more sensitive coordinates. Under this scenario, we further simplify the error bound which allows us to better highlight implications of \cref{theorem:mean_estimation}.
\begin{restatable}{corollary}{CorMse} \label{cor:mse}
Suppose we are under the premise of \cref{theorem:mean_estimation}, with $\delta_i = \delta$ for $1 \leq i \leq k$ and $\delta_i = \epsilon > 2\delta$ for $d \geq i > k$. Then, by choosing $\zeta = 0.5$, the mean-squared error upper bound in \eqref{eqn:MSE} simplifies to
\begin{align}
\mathcal{O}(1) \frac{1}{n} \cdot \left ( \frac{dk}{\delta^2} + \frac{(d-k)^2}{\epsilon^2} \right ),
\end{align}
for the case $q \leq \frac{e^{\delta/2}-1}{e^\epsilon-1}$, and to
\begin{align}
\mathcal{O}(1) \frac{1}{n} \cdot \left ( \frac{dk}{\delta^2} + \frac{(d-k)^2}{ \frac{d-k}{d} \delta^2 + (c_d-\delta/2)^2} \right ),
\end{align}
otherwise, with $c_d = \log(\frac{e^{ \delta/2}+q-1}{q})$ decreasing from $\epsilon$ to $\delta/2$ as $q$ increases from $(e^{\delta/2}-1)/(e^\epsilon-1)$ to 1.
\end{restatable}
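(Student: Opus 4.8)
The plan is to substitute the explicit parameter choices of \cref{theorem:mean_estimation} into the error bound \eqref{eqn:MSE} and simplify. Setting $\zeta=0.5$, and noting $\tilde\delta_1=\delta$, $\tilde\delta_d=\epsilon$, the top level becomes $c_d=\min\{\log((e^{\delta/2}+q-1)/q),\epsilon\}$; this equals $\epsilon$ precisely when $q\le(e^{\delta/2}-1)/(e^\epsilon-1)$ (Case~1) and equals $\log((e^{\delta/2}+q-1)/q)$ otherwise (Case~2), and a one-line computation shows that in Case~2 it decreases from $\epsilon$ to $\delta/2$ as $q$ grows from $(e^{\delta/2}-1)/(e^\epsilon-1)$ to $1$. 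The key structural observation is that $\tilde\delta_i$ takes only two values ($\delta$ for $i\le k$, $\epsilon$ for $i>k$), so the recipe for $c_i$ produces only two distinct levels $c_1=\cdots=c_k$ and $c_{k+1}=\cdots=c_d$ (the latter because $c_d\le\epsilon=\tilde\delta_i$ always selects the first branch). Hence every increment $c_i-c_{i-1}$ vanishes except $c_1-c_0=c_1$ and $c_{k+1}-c_k=c_d-c_1$, so only $w_1=c_1^2/d$ and $w_{k+1}=(c_d-c_1)^2/(d-k)$ are nonzero.

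With this, $\sum_{m\le i}w_m$ equals $w_1$ for $i\le k$ and $w_1+w_{k+1}$ for $i>k$, and (dropping the harmless $\wedge d$, which only decreases terms) the bound \eqref{eqn:MSE} collapses to
\begin{equation*}
\frac1n\Big[\,k\,\big(\tfrac{1}{w_1}\wedge d\big)+(d-k)\,\big(\tfrac{1}{w_1+w_{k+1}}\wedge d\big)\,\Big].
\end{equation*}
I would first dispose of the first block uniformly: one shows $c_1\in[\delta/2,\delta]$ in every case, so $1/w_1=d/c_1^2\le 4d/\delta^2$ and this block contributes $\mathcal{O}(1)\,dk/\delta^2$. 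The bound $c_1\ge\delta/2$ is the recurring subcomputation: in Case~1 it follows from $q(e^\epsilon-1)\le e^{\delta/2}-1$, and in Case~2 from the defining identity $\log(1+q(e^{c_d}-1))=\delta/2$, which makes the ``otherwise'' branch give $c_1=\delta-\delta/2=\delta/2$ and the first branch give $c_1=c_d\ge\delta/2$.

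For the second block in Case~1 ($c_d=\epsilon$), since $c_1\le\delta<\epsilon/2$ we get $c_d-c_1>\epsilon/2$, hence $w_1+w_{k+1}\ge w_{k+1}\ge\epsilon^2/(4(d-k))$, and the block contributes $\mathcal{O}(1)\,(d-k)^2/\epsilon^2$, finishing that case. In Case~2 the computation of $w_1+w_{k+1}$ splits on whether $c_d>\delta$ (then $c_1=\delta/2$, so $w_1+w_{k+1}=\tfrac{\delta^2}{4d}+\tfrac{(c_d-\delta/2)^2}{d-k}$, which is within a factor $4$ of $\tfrac1{d-k}(\tfrac{d-k}{d}\delta^2+(c_d-\delta/2)^2)$ and yields the claimed second term directly) or $c_d\le\delta$ (then $c_1=c_d$, so $w_{k+1}=0$).

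The main obstacle is this last subcase $c_d\le\delta$, where the second query carries no weight and every coordinate is estimated from the single full-dimensional query at level $c_d$, so the second block is only bounded by $\mathcal{O}(d(d-k)/\delta^2)$ rather than by $(d-k)^2/D$. Here the claimed second term may be strictly smaller than this block, and the argument must show the surplus is absorbed by the first term. One checks that $c_d\le\delta$ forces $q\ge1/(e^{\delta/2}+1)$, and splits further: if $(c_d-\delta/2)^2\lesssim\tfrac{d-k}{d}\delta^2$ then the claimed denominator $D\approx\tfrac{d-k}{d}\delta^2$ and $(d-k)^2/D\approx d(d-k)/\delta^2$ already matches the block; whereas if $(c_d-\delta/2)^2$ dominates, then since $(c_d-\delta/2)^2\le\delta^2/4$ this forces $d-k<d/4$, hence $k>3d/4$ and $d(d-k)/\delta^2\lesssim dk/\delta^2$, so the first term alone dominates the block. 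Tracking these constant factors and confirming that the two claimed terms jointly upper-bound both blocks in every subcase is the delicate bookkeeping; the remaining estimates on $c_1$ and $c_d$ are elementary.
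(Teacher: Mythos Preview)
Your proposal is correct and follows the same route as the paper's proof: compute the $c_i$ values from \cref{theorem:mean_estimation}, observe they collapse to two levels $c_1=\cdots=c_k$ and $c_{k+1}=\cdots=c_d$, and substitute into \eqref{eqn:MSE}, splitting on whether $c_d=\epsilon$ or not. The one substantive difference is that you explicitly treat the subcase $c_d\le\delta$ in Case~2 (equivalently $q>1/(e^{\delta/2}+1)$), where the first branch of the recipe gives $c_1=\cdots=c_k=c_d$ rather than $\delta/2$, so $w_{k+1}=0$ and the second query carries no weight; the paper simply writes $c_1=\cdots=c_k=\delta/2$ throughout Case~2, tacitly applying the ``otherwise'' branch even when it is not selected. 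Your verification that the claimed bound still dominates the resulting $\mathcal{O}(d^2/(n\delta^2))$ error in this subcase (via the split on whether $(c_d-\delta/2)^2$ or $\tfrac{d-k}{d}\delta^2$ dominates the denominator) fills that gap, at the cost of the extra bookkeeping you describe; the rest of your argument matches the paper's essentially line for line.
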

Under the special structure of $\bm\delta$ and $\epsilon$ in \cref{cor:mse}, the MSE can be thought of as sum of MSE of the more sensitive and the less sensitive coordinates.
For low levels of correlation, the MSE of less private coordinates behave like the MSE of an $\epsilon$-LDP mechanism on $d-k$ dimensional vector but the MSE of the more sensitive part \textit{depends on the dimension $d$ of the whole vector}, not just the sensitive part, showing how the more sensitive part of the data affects the whole vector.
As the correlation increases $q \to 1$, we have $c_d \to \delta/2$ and the MSE matches that of a $\min \bm\delta$-LDP mechanism. 
A supporting experiment is deferred to Appendix~\ref{sec:mean-exp}.
\section{Private Least-Squares Regression} \label{sec:LSE}
We focus on how the BCDP framework can be employed on least-squares problems. 
Consider a setting in which $n$ datapoints are distributed among $n$ users, where $\vecx^{(i)} = [{\vecz^{(i)}}^\top \ l^{(i)}]^\top \in \mathbb{R}^{d}$ represents the data of user $i$, with $\vecz^{(i)}$ as the feature vector for the $i$-th data point and $l^{(i)}$ as its label. Our goal is to solve the following empirical risk minimization problem privately over the data of $n$ users:
\begin{equation} \label{eqn:main_optimization}
\min_{\btheta \in \Theta} f(\btheta ) := \frac{1}{n} \sum_{i=1}^n \ell(\btheta, \vecx^{(i)}),
\end{equation}
where $\ell(\btheta, \vecx^{(i)}) := \frac{1}{2}(\btheta^\top \vecz^{(i)} - l^{(i)})^2$ and $\Theta \subset \mathbb{R}^{d-1}$ is a compact convex set. We denote the solution of \eqref{eqn:main_optimization} by $f^*$, achieved at some point $\btheta^*$.

The customary approach in private convex optimization is to use a gradient-based method, perturbing the gradient at each iteration to satisfy the privacy requirements. 
However, this approach is not suitable for cases in which coordinate-based privacy guarantees are needed. The difficulty is that each coordinate of the gradient is typically influenced by all  coordinates of the data.  As a result,  and it becomes challenging to distinguish the level of privacy offered to different coordinates of the data. Instead, we first perturb the users’ data locally and then compute the gradient at the perturbed point to update the model at the server. 

However, computing the gradient based on perturbed data introduces its own challenges. In particular, the gradient of $\ell(\btheta, \vecx)$ with $\vecx = [{\vecz}^\top l]^\top$, given by
\begin{equation} \label{eqn:gradient_ell}
\nabla \ell(\btheta, \vecx) = \vecz\vecz^\top\btheta - l \vecz,    
\end{equation} 
is not a linear function of the data. As a result, although the privacy mechanism outputs an unbiased estimate of the true data, this non-linearity introduces an error in the gradient with a non-zero mean, preventing the algorithm from converging to the optimal solution. 
To overcome this challenge, we observe that the aforementioned non-linearity in the data arises from the product terms $\vecz\vecz^\top$ and $l\vecz$. Therefore, if we create two private copies of the data and replace each term of the product with one of the copies, we obtain a stochastic unbiased estimate of the gradient at the server. The catch, of course, is that we need to divide our privacy budget in half. 

A summary of the algorithm is presented in \cref{alg:LSE_BCDP}. There,  $\text{OPT}(g, \delta)$ refers to any convex optimization algorithm of choice, e.g., gradient descent, that finds the minimizer of the function $g(\cdot)$ over the set $\Theta$ and up to an error of $\delta$.
\floatname{algorithm}{Algorithm}
\begin{algorithm}[t]
   \caption{BCDP and LDP least-squares regression}
   \label{alg:LSE_BCDP}
\begin{algorithmic}
   \STATE {\bfseries Input:} Users' data $\{\vecx^{(i)}\}_{i=1}^n$ and vector $\vecc$ 
   \FOR{$i \in [n]$}{}  
   \STATE {//user $i$ sends locally perturbed data to the server}
   \STATE $\bm{\hat{x}}^{(i,1)} \gets M_{\mathsf{mean}}(\vecx^{(i)},\vecc/2)$
   \STATE $\bm{\hat{x}}^{(i,2)} \gets M_{\mathsf{mean}}(\vecx^{(i)},\vecc/2)$
   \ENDFOR
   \STATE {//server optimizes with the perturbed data}
   \STATE $\hat{f}(\btheta) := \frac{1}{2n} \sum_{i=1}^n \left (\btheta^\top \bm{\hat{z}}^{(i,1)}\bm{\hat{z}}^{{(i,2)}^\top} \btheta - 2\btheta^\top  \hat{l}^{(i,1)} \bm{\hat{z}}^{(i,2)} \right )$
   \STATE $\bm{\hat{\theta}} \gets \text{OPT}(\hat{f}, \frac{1}{{n}})$ 
   \STATE  Return $\bm{\hat{\theta}}$
\end{algorithmic}
\end{algorithm}
We make the assumption that the data is bounded.
\begin{assumption} \label{assumption:bounded_data}
 $\vecx^{(i)} \in [-1,1]^d$ for all $i\in[n]$.   
\end{assumption}
We also need \cref{assumption:bounded_TV}, which bounds the dependence of each coordinate on the other coordinates of the data. 
Recall that the last coordinate is exempt from this assumption, which is why we position the label as the last coordinate of the data. It is not reasonable to assume the feature vector is weakly correlated with the label, as that would violate the premise of regression. We next state the following result on \cref{alg:LSE_BCDP}. 
\begin{restatable}{theorem}{thmLSE}\label{theorem:LSE}
Suppose Assumptions \ref{assumption:bounded_TV} and \ref{assumption:bounded_data} hold. Then, \cref{alg:LSE_BCDP}, with $\vecc$ set similar to \cref{theorem:mean_estimation},
is $\bm\delta$-\bdp and $\varepsilon$-LDP. 
Moreover, using Mechanism \ref{alg:duchi-mldp} as $M_{\mathsf{LDP}}$, we obtain the following error rate
\begin{equation}
 \bbE \left[ f(\bm{\hat{\theta}}) - f^*  \bigg| \vecx^{(1)},\ldots,\vecx^{(n)} \right] \lesssim 
\max_{\btheta \in \Theta} (\|\btheta\|^2+ \|\btheta\|) \left( \frac{\log d}{\sqrt{n}} (r^2 + rd) \wedge d \right ),
\end{equation}
with $r^2 = \sum_{i=1}^d 1/ ( \sum_{k=1}^i \frac{(c_k - c_{k-1})^2}{d-k+1}) $.
\end{restatable}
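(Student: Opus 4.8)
The plan is to split the argument into a privacy part and an accuracy part. The privacy claim is essentially immediate: since $\vecc$ is set exactly as in \cref{theorem:mean_estimation}, and since \cref{alg:LSE_BCDP} invokes $M_{\mathsf{mean}}$ twice, each with budget $\vecc/2$, I would first invoke \cref{theorem:mean_estimation} to conclude that each call produces a mechanism satisfying the halved privacy guarantee. Then I would need a composition-style argument. Because the two copies $\bm{\hat x}^{(i,1)},\bm{\hat x}^{(i,2)}$ are computed from the same raw data $\vecx^{(i)}$ and then released, I cannot appeal to BCDP composition directly (the excerpt shows composition fails for BCDP in general). Instead I would treat the pair $(\bm{\hat x}^{(i,1)},\bm{\hat x}^{(i,2)})$ as the output of a \emph{single} mechanism applied to $\vecx^{(i)}$, argue it is $\vecc$-CDP and $\varepsilon$-LDP by the standard sequential composition of LDP/CDP (which \emph{does} hold — the failure is only for BCDP), and then reapply \cref{prop:cdp-to-bdp} exactly as in the proof sketch of \cref{theorem:mean_estimation} to translate the $\vecc$-CDP plus LDP guarantee into the $\bm\delta$-BCDP guarantee. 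The post-processing proposition (\cref{lemma:bdp_data_processing}) then covers the server's deterministic optimization step $\text{OPT}(\hat f,\tfrac1n)$.

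For the accuracy part, the key structural fact to establish first is \emph{unbiasedness} of the gradient of $\hat f$. Since $M_{\mathsf{mean}}$ returns an unbiased estimate of each coordinate (before projection), and the two copies $\bm{\hat z}^{(i,1)},\bm{\hat z}^{(i,2)}$ are conditionally independent given $\vecx^{(i)}$, I would show that $\mathbb{E}[\nabla\hat f(\btheta)\mid \{\vecx^{(i)}\}] = \nabla f(\btheta)$ for every fixed $\btheta$; this is precisely the point of using two private copies to linearize the products $\vecz\vecz^\top$ and $l\vecz$ in \eqref{eqn:gradient_ell}. Thus $\hat f$ is an unbiased stochastic surrogate for $f$, and the problem reduces to bounding the optimization error of running a stochastic-gradient / empirical-risk procedure on $\hat f$.

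The main work — and the step I expect to be the principal obstacle — is controlling the variance of the stochastic gradient, which feeds into the standard convex optimization error bound. I would bound $\mathbb{E}\|\nabla\hat f(\btheta)-\nabla f(\btheta)\|^2$ by expanding the quadratic terms and using the per-coordinate MSE bound \eqref{eqn:MSE} from \cref{theorem:mean_estimation}; this is where the quantity $r^2 = \sum_{i=1}^d 1/(\sum_{k=1}^i (c_k-c_{k-1})^2/(d-k+1))$ enters, as it is exactly the aggregate coordinate-wise estimation error of $M_{\mathsf{mean}}$. The delicate points are that the noise in $\bm{\hat z}^{(i)}$ is multiplied by $\btheta$ and by the noisy label $\hat l^{(i)}$, so the variance scales with $\max_{\btheta}(\|\btheta\|^2+\|\btheta\|)$ and picks up cross terms of order $r^2$ and $rd$; keeping these bounded requires the boundedness in \cref{assumption:bounded_data} and care with the fourth-moment / product terms of the Duchi channel noise. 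Having established a variance bound of the form $(\|\btheta\|^2+\|\btheta\|)(r^2+rd)$, I would plug it into the convergence guarantee of $\text{OPT}(\cdot,\tfrac1n)$ for an unbiased stochastic oracle over the compact convex $\Theta$, obtaining the $\tfrac{\log d}{\sqrt n}$ rate, and finally take the $\wedge\, d$ trivially from the fact that $f(\bm{\hat\theta})-f^*$ is bounded by a constant times $d$ over the bounded domain. Combining the privacy and accuracy parts yields the stated theorem.
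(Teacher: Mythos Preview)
Your privacy argument is correct and matches the paper's: compose the two calls to $M_{\mathsf{mean}}$ at the LDP/CDP level (where composition holds), reapply \cref{prop:cdp-to-bdp} to recover $\bm\delta$-BCDP, and invoke post-processing for the server's optimization.

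The accuracy argument, however, has a genuine gap. You frame the analysis as ``bound the variance of the stochastic gradient and plug into the convergence guarantee of $\mathrm{OPT}$ for an unbiased stochastic oracle.'' But $\mathrm{OPT}$ is not a stochastic method: all $n$ users release their privatized data once, the server forms the single random surrogate $\hat f$, and then $\mathrm{OPT}$ \emph{deterministically} minimizes $\hat f$ to tolerance $1/n$. There is no sequence of stochastic gradients being fed to an oracle, so there is no SGD-style convergence rate to invoke, and $n$ is the number of users, not the number of optimization iterations. More importantly, pointwise control of $\mathbb{E}\|\nabla\hat f(\btheta)-\nabla f(\btheta)\|^2$ at each fixed $\btheta$ does not let you bound $f(\bm{\hat\theta})-\hat f(\bm{\hat\theta})$, because $\bm{\hat\theta}$ depends on the same randomness that defines $\hat f$.

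What the paper actually does is a uniform-deviation (ERM-style) argument: from $\hat f(\bm{\hat\theta})\leq \hat f(\btheta^*)+1/n$ and $\mathbb{E}[\hat f(\btheta^*)]=f(\btheta^*)$ one reduces to bounding $\mathbb{E}\big[\sup_{\btheta\in\Theta}(f(\btheta)-\hat f(\btheta))\big]$. Because both $f$ and $\hat f$ are quadratic in $\btheta$, this supremum is controlled by the \emph{spectral norm} of the zero-mean random matrix $\tfrac1n\sum_i(\bm z^{(i)}\bm z^{(i)\top}-\bm{\hat z}^{(i,1)}\bm{\hat z}^{(i,2)\top})$ (and the analogous vector term), which the paper bounds via Tropp's expected matrix concentration inequality. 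That is precisely where both the $\log d$ factor and the $1/\sqrt{n}$ rate originate; your pointwise-variance route produces neither naturally. To fix your outline, replace the stochastic-oracle step with this uniform bound and invoke a matrix Bernstein/Tropp-type inequality on the centered outer-product matrices.
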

The proof of this theorem can be found in Appendix \ref{proof:TheoremLSE}. It is worth noting that $r^2$ appears in the mean-squared error bound in \cref{theorem:mean_estimation}. That said, if we consider special cases similar to the setting in \cref{cor:mse}, the term $r^2$  would simplify in a similar manner.

We conclude this section by noting that, while we focus on linear regression here, our approach can provide an unbiased gradient and guarantee privacy in any setting where the loss function \( \ell(\btheta, \vecx) \) has a quadratic (or even low-degree polynomial) dependence on \( \vecx \), such as neural networks with $\ell_2$  loss. However, the optimization error in such cases may require further considerations, as non-convex loss functions could arise.

\section{Acknowledgements}
This work was done in part while M.A. was a research fellow at the Simons Institute for the Theory of Computing.
A.F. and M.J. acknowledge support from the European Research Council (ERC-2022-SYG-OCEAN-101071601).  Views and opinions expressed are however those of the authors only and do not
necessarily reflect those of the European Union or the European Research Council
Executive Agency. Neither the European Union nor the granting authority can be
held responsible for them.
S.C. acknowledges support from AI Policy Hub, U.C. Berkeley; S.C. and T.C. acknowledge support from NSF CCF-1750430 and CCF-2007669.
\bibliographystyle{abbrvnat}
\bibliography{References}
\newpage
\appendix
\section{Proofs for Section 2 and Section 3}
\subsection{Proof of \cref{lemma:LDP_Bayesian}}\label{sec:proof_of_lemma:LDP_Bayesian}
\lemLDPBayesian*

\begin{proof}
(A) Forward direction:

For any $R \in \cF_{\cY}$ and $S,S' \in \cF_{\cX}^+$, we have
\begin{align}
    \mu_{\vecx}(R) &\leq e^{\epsilon} \mu_{\vecx'}(R) \\
    \implies \int_{\vecx \in S} \mu_{\vecx}(R) d\pi(\vecx) &\leq e^{\epsilon} \int_{\vecx \in S} \mu_{\vecx'}(R) d\pi(\vecx) \\
    \implies \Pr\{M(\vecx) \in R, \vecx \in S\} &\leq e^{\epsilon}  \pi(S) \mu_{\vecx'}(R) , \\
    \implies  \Pr\{M(\vecx) \in R| \vecx \in S\} &\leq e^{\epsilon} \mu_{\vecx'}(R). 
\end{align}
Similarly integrate again $\int_{\vecx' \in S'} \cdot d\pi(\vecx')$ to get the desired result.

 (B) Converse:

 Fix $R \in \cF_{\cY}$ and let 
 \begin{align}
     U(R) &= \mathsf{ess\ sup}_{\vecx \in \cX} \mu_{\vecx}(R), \\
     L(R) &= \mathsf{ess\ inf}_{\vecx \in \cX} \mu_{\vecx}(R),    
 \end{align}
where the essential supremum and infimum are with respect to $(\cX,\cF_{\cX}, \pi)$.
Fix any $\delta>0$ and define
\begin{equation}
    S = \{\vecx \in \cX: \mu_{\vecx}(R) > U(R) - \delta \}, \     S' = \{\vecx \in \cX: \mu_{\vecx}(R) < L(R) + \delta \}. 
\end{equation}
By definition of essential supremum and infimum and using the fact that $\mu$ is a Markov kernel, $S,S' \in \cF_{\cX}^+$.

Using the $\epsilon$-BDP condition, we can obtain
\begin{align}
    U(R) - \delta &\leq \frac{1}{\pi(S)} \int_{\vecx \in S} \mu_{\vecx}(R) d\pi(\vecx) \\
    &\leq \frac{e^{\epsilon}}{\pi(S')} \int_{\vecx \in S'} \mu_{\vecx}(R) d\pi(\vecx) \\
    &\leq e^{\epsilon} ( L(R) + \delta ). 
\end{align}
Since $\delta > 0$ is arbitrary, we get $U(R) \leq e^{\epsilon} L(R)$ $\forall R \in \cF_{\cY}$.

Now, for $R$, define
\begin{equation}
    E(R) := \{\vecx \in \cX: \mu_{\vecx}(R) \notin [L(R),U(R)] \},
\end{equation}
and note that $\pi(E(R)) = 0$ by definition of essential supremum and infimum and using the fact that $\mu$ is a Markov kernel.

We shall now modify $M$ on a judiciously chosen $\pi$-null set to obtain an $\epsilon$-LDP mechanism $M'$.
Since $\cY$ is second countable as it is a Polish space, there exists a countable collection of open sets $\cV = \{V_i \}_{i \geq 1} \subset \cF_{\cY}$ such that every open set $R \in \cF_{\cY}$
 can be written as a disjoint union of some countable subset of $\cV$.

 Define $E  = \cup_{i \geq 1} E(V_i)$ and note that $\pi(E) = 0$ by countable sub-additivity.
 Fix any $\bar\vecx \in E^c$ and define the modification $M'$ of $M$ via
 \begin{equation}
     M'(\vecx) = 
     \begin{cases}
     M(\bar\vecx) & \text{if } \vecx \in E \\
     M(\vecx) & \text{otherwise.}
     \end{cases}
 \end{equation}

 Equivalently, $M'$ is represented by the Markov kernel $\mu': \cX \times \cF_{\cY} \to [0,1]$ defined by
 \begin{equation}
     \mu'_{\vecx}(R) = 
     \begin{cases}
         \mu_{\bar\vecx}(R) & \text{if } \vecx \in E \\
         \mu_{\vecx}(R) & \text{otherwise.}
     \end{cases}
 \end{equation}

Since $\pi(E) = 0$, it is easy to see that $\Pr\{M(\vecx) = M'(\vecx) \} = 1$.
By construction, since $E = \cup_{i \geq 1} E(V_i)$, 
\begin{equation}
    \mu'_{\vecx}(V_i) \leq U(V_i) \leq e^{\epsilon} L(V_i) \leq e^{\epsilon} \mu'_{\vecx'}(V_i) \, \, \forall \vecx,\vecx' \in \cX, V_i \in \cV.
\end{equation}
By second countability, and $\sigma$-additivity, we conclude,
\begin{equation}
    \mu'_{\vecx}(R) \leq e^{\epsilon} \mu'_{\vecx'}(R) \, \, \forall \vecx,\vecx' \in \cX, \forall \text{open }R \in \cF_{\cY}.
\end{equation}
Every second countable space is Radon, so by outer regularity of both measures $\mu_{\vecx}$ and $\mu'_{\vecx}$, we finally obtain
\begin{equation}
     \mu'_{\vecx}(R) \leq e^{\epsilon} \mu'_{\vecx'}(R) \, \, \forall \vecx,\vecx' \in \cX, \forall R \in \cF_{\cY}.
\end{equation}
Hence, $M'$ is $\epsilon$-LDP as desired.
 \end{proof}

\subsection{Proof of \cref{theorem:HT}}\label{sec:HT-proof}
\thmHT*
\begin{proof}
By definition $\alpha(R_i,M,S_i,S_i') = \Pr\{M(\vecx) \in R_i | \vecx \in S_i\}$ and  
\begin{align}
    \Pr\{M(\vecx) \in T_i|x_i \in S_i\} &\leq e^{\delta_i} \Pr\{M(\vecx) \in T_i|x_i \in S_i'\} \ \forall  S_i,S_i' \in \cF_{\cX_i}^+, T_i \in \cF_{\cY} 
\end{align}
Setting $T_i = R_i ^C$, get 
$\alpha(R_i,M,S_i,S_i') + e^{\delta_i}\beta(R_i,M,S_i,S_i') \geq 1$.
One can also switch $S_i,S_i'$ above and set $T_i = R_i$ to get $e^{\delta_i}\alpha(R_i,M,S_i,S_i') + \beta(R_i,M,S_i,S_i') \geq 1$.

The converse is straightforward as well.
\begin{align}
    e^{\delta_i}\alpha(R_i,M,S_i,S_i') + \beta(R_i,M,S_i,S_i') \geq 1 \ &\forall S_i,S_i' \in \cF_{\cX_i}^+, R_i \in \cF_{\cY} \\
    \Leftrightarrow \Pr\{M(\vecx) \in R_i | x_i \in S_i'\} \leq e^{\delta_i} \Pr\{M(\vecx) \in R_i | x_i \in S_i\} \ &\forall S_i,S_i' \in \cF_{\cX_i}^+, R_i \in \cF_{\cY}.
\end{align}
 
\end{proof}

\subsection{Proof of \cref{proposition:LDP_to_BCDP}}\label{sec:ldp-to-bcdp-proof}
\propLdpToBcdp*
\begin{proof}
By \cref{lemma:LDP_Bayesian}, an $\epsilon$-DP algorithm is $\epsilon$-BDP as well.
Recall $\Pr\{M(\vecx) \in R| \vecx \in S\} = \frac{P(S \times R)}{\pi(S)}.$
Thus, $\epsilon$-BDP guarntee ensures that $\forall S,S' \in \cF_{\cX}^+$,
$$ \frac{P(S \times R)}{\pi(S)} \leq e^{\epsilon} \frac{P(S' \times R)}{\pi(S')}.$$
Restricting to the sets of form $S = S_i \times \cX_{-i}$ and $S' = S_i' \times \cX_{-i}$ for $S_i, S_i' \in \cF_{\cX_i}^+$ above yields the desired BCDP result.

\end{proof}

\subsection{Proof of \cref{prop:cdp-to-bdp}}\label{sec:prop:cdp-to-bdp}
\propCdpToBdp*
\begin{proof}
Consider the sets $R \in \cF_{\cY}$ such that $\Pr\{M(\vecx) \in R\} = 0$, then for $S_i, S_i' \in \cF_{\cX_i}^+$, we have $\Pr\{M(\vecx) \in R|x_i \in S_i\} = \Pr\{M(\vecx) \in R|x_i \in S_i'\} = 0$.
Thus, to bound the $\bm\delta$ in $\bm\delta$-BCDP guarantee, we can restrict to $R$ in the set $\cF_{\cY}^+ := \{ R \in \cF_{\cY}| \Pr\{M(\vecx) \in R\} > 0 \}$.

For $S_i,S_i' \in \cF_{\cX_i}^+$, $R \in \cF_{\cY}^+$, 
    \begin{align}
        \frac{\Pr\{M(\vecx) \in R|x_i \in S_i \} }{ \Pr\{M(\vecx) \in R|x_i \in S_i' \} } &= \frac{\Pr\{M(\vecx) \in R,x_i \in S_i \} }{ \pi\{x_i \in S_i \}} \frac{\pi\{x_i \in S_i' \} }{\Pr\{M(\vecx) \in R,x_i \in S_i' \} }.  \label{eq:ratio-Ds}
    \end{align}

For the set $R$ under consideration, define 
\begin{align}
    l(\vecx_{-i}) &= \inf_{x_i \in \cX_i} \mu_{x_i,\vecx_{-i}}(R), \\
    u(\vecx_{-i}) &= \sup_{x_i \in \cX_i} \mu_{x_i,\vecx_{-i}}(R). 
\end{align}
By CDP property, we have $u(\vecx_{-i}) \leq e^{c_i} l(\vecx_{-i})$.
Now note that 
\begin{align}
    \frac{\Pr\{M(\vecx) \in R,x_i \in S_i \} }{ \pi\{x_i \in S_i \}} &= \frac{1}{\pi\{x_i \in S_i \}} \int_{\vecx_{-i} \in \cX_{-i},x_i \in S_i } \mu_{x_i,\vecx_{-i}}(R) d\pi(x_i,\vecx_{-i}) \\
    &\leq \frac{1}{\pi\{x_i \in S_i \}} \int_{\vecx_{-i} \in \cX_{-i},x_i \in S_i } u(\vecx_{-i}) d\pi(x_i,\vecx_{-i}) \\
    &= \int_{\vecx_{-i} \in \cX_{-i}} u(\vecx_{-i}) d\pi_{\vecx_{-i}|x_i \in S_i}(\vecx_{-i}). \label{eq:conc1}
\end{align}
Similarly, we have 
\begin{equation}
    \frac{\Pr\{M(\vecx) \in R,x_i \in S_i' \} }{ \Pr\{x_i \in S_i' \}} \geq 
   \int_{\vecx_{-i} \in \cX_{-i}} l(\vecx_{-i}) d\pi_{\vecx_{-i}|x_i \in S_i'}(\vecx_{-i}).  \label{eq:conc2}
\end{equation}
\end{proof}
Using \eqref{eq:conc1} and \eqref{eq:conc2} in \eqref{eq:ratio-Ds}, we obtain
\begin{align}
    \frac{\Pr\{M(\vecx) \in R|x_i \in S_i \} }{ \Pr\{M(\vecx) \in R|x_i \in S_i' \} } &\leq \frac{\int_{\vecx_{-i} \in \cX_{-i}} u(\vecx_{-i}) d\pi_{\vecx_{-i}|x_i \in S_i}(\vecx_{-i})}{\int_{\vecx_{-i} \in \cX_{-i}} l(\vecx_{-i}) d\pi_{\vecx_{-i}|x_i \in S_i'}(\vecx_{-i})} \\
    &\leq e^{c_i} \frac{\int_{\vecx_{-i} \in \cX_{-i}} l(\vecx_{-i}) d\pi_{\vecx_{-i}|x_i \in S_i}(\vecx_{-i})}{\int_{\vecx_{-i} \in \cX_{-i}} l(\vecx_{-i}) d\pi_{\vecx_{-i}|x_i \in S_i'}(\vecx_{-i})} \\
    &\leq e^{c_i}\left(1 + \mathsf{TV}(\pi_{\vecx_{-i}|x_i \in S_i},\pi_{\vecx_{-i}|X_i\in S_i'}) \frac{\max_{\vecx_{-i}}l(\vecx_{-i})- \min_{\vecx_{-i}}l(\vecx_{-i})}{\min_{\vecx_{-i}}l(\vecx_{-i})} \right) \\
        &\leq e^{c_i}\left(1 + \mathsf{TV}(\pi_{\vecx_{-i}|x_i \in S_i},\pi_{\vecx_{-i}|x_i \in S_i'}) (e^{\sum_{j \neq i} c_j}-1) \right),
\end{align}
where we used $\frac{\max_{\vecx_{-i}}l(\vecx_{-i})}{\min_{\vecx_{-i}}l(\vecx_{-i})} \leq e^{\sum_{j\neq i} c_j}$ by CDP property.

The $\sum_i c_i$-LDP property follows by straightforward composition. 

Finally, notice that, in case that we know the mechanism is $\varepsilon$-LDP, we can bound $\frac{\max_{\vecx_{-i}}l(\vecx_{-i})}{\min_{\vecx_{-i}}l(\vecx_{-i})}$ by $e^\varepsilon$.
In addition, by \cref{proposition:LDP_to_BCDP}, we know $\delta_i' \leq \epsilon$.
Thus, we get $\delta_i' \leq \min\{c_i + \log (1 + q_i e^{\epsilon} - q_i), \epsilon \}$

\subsection{Additional Results for Section \ref{sec:CDPTOBCDP}}
\label{sec:proof:lemma:CDP_to_BCP_approximation}
\begin{restatable}{proposition}{lemCDPToBCPapproximation} \label{lemma:CDP_to_BCP_approximation}
Let 
\begin{equation}
    A_{\bm\delta}[i,j] = \begin{cases}
        \frac{1}{\delta_i} & i=j,\\
        \frac{1}{\log\left( 1 + \frac{e^{\delta_i}-1}{q_i}\right)} & \text{else}.
    \end{cases}
\end{equation}
Then, the condition $A_{\bm\delta}\vecc \leq \bm1$ implies $\delta_i \geq c_i + \log (1 + q_i e^{\sum_{j\neq i} c_j} - q_i)$ for all $i \in [d]$.
\end{restatable}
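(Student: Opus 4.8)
The plan is to fix a coordinate $i \in [d]$ and work only with the $i$-th row of the linear system $A_{\bm\delta}\vecc \leq \bm1$, which reads $\frac{c_i}{\delta_i} + \frac{1}{\beta_i}\sum_{j \neq i} c_j \leq 1$, where I abbreviate $\beta_i := \log\!\left(1 + (e^{\delta_i}-1)/q_i\right)$. Setting $s := \sum_{j\neq i} c_j \geq 0$, the target inequality becomes $c_i + \log(1 + q_i(e^s-1)) \leq \delta_i$, since $1 + q_i e^s - q_i = 1 + q_i(e^s-1)$. Thus everything reduces to a statement about the two scalars $(c_i, s)$, both nonnegative because $\vecc \in \bbR_{\geq 0}^d$.

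The key observation is that the target is a \emph{convex} function of $(c_i,s)$. Indeed, define $g(c_i,s) := c_i + h(s)$ with $h(s) := \log(1 - q_i + q_i e^s)$; the first term is linear, and $h$ is convex because $h'(s) = 1 - (1-q_i)/(1 - q_i + q_i e^s)$ is nondecreasing in $s$ whenever $q_i \in [0,1]$ (the denominator increases in $s$ while $1 - q_i \geq 0$). Consequently $g$ attains its maximum over any compact convex polytope at one of its vertices.

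Next I would identify the feasible region. The two nonnegativity constraints $c_i \geq 0$, $s \geq 0$ together with the row-$i$ constraint $\frac{c_i}{\delta_i} + \frac{s}{\beta_i} \leq 1$ cut out a triangle $T$ with vertices $(0,0)$, $(\delta_i, 0)$, and $(0,\beta_i)$. Evaluating $g$ at these three points is the crux: $g(0,0) = 0$; $g(\delta_i, 0) = \delta_i + h(0) = \delta_i$; and, using the defining identity $e^{\beta_i} = 1 + (e^{\delta_i}-1)/q_i$ --- equivalently $1 + q_i(e^{\beta_i}-1) = e^{\delta_i}$ --- one gets $g(0,\beta_i) = h(\beta_i) = \log(1 + q_i(e^{\beta_i}-1)) = \delta_i$. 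Hence $g \leq \delta_i$ at every vertex of $T$.

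Combining the last two steps finishes the argument: since $(c_i, s)$ lies in $T$ and $g$ is convex, $g(c_i, s) \leq \max_{v \in \mathrm{vert}(T)} g(v) = \delta_i$, which is exactly $c_i + \log(1 + q_i e^{s} - q_i) \leq \delta_i$. I expect the main (and essentially only) obstacle to be recognizing the right reduction to the pair $(c_i, s)$ with $s = \sum_{j \neq i} c_j$ and verifying the convexity of $h$; after that, the proof is driven entirely by the fact that the two nontrivial vertices of the triangle sit exactly where $g$ equals $\delta_i$, which is precisely how the off-diagonal entries of $A_{\bm\delta}$ were chosen. The degenerate case $q_i = 0$ (independent coordinates) can be handled separately or as a limit: then $\beta_i = \infty$, the $s$-term drops out, and the row-$i$ constraint reduces to $c_i \leq \delta_i$, which is again the claim.
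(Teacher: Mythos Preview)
Your proof is correct and follows essentially the same route as the paper: both isolate the $i$-th row, reduce to the scalar pair $(c_i, s)$ with $s=\sum_{j\neq i} c_j$, and finish with a one-variable convexity/concavity argument. The only cosmetic difference is that the paper shows the target boundary $s = \log\bigl(1 + (e^{\delta_i - c_i}-1)/q_i\bigr)$ is concave in $c_i$ (hence lies above the chord that is the linear constraint), whereas you show the equivalent fact that $c_i + h(s)$ is convex and therefore maximized at the triangle's vertices; these are two sides of the same inequality.
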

\begin{proof}
Without loss of generality, we establish the proof for $i=1$. For a given value of $c_1$, we have the constraint that
    \begin{equation} \label{eq:c-constraint}
        \sum_{j=2}^d c_j \leq \log\left( 1 + \frac{e^{\delta_1 - c_1}-1}{q_1}\right).
    \end{equation}
Note that $g(x) = \log\left( 1 + \frac{e^{\delta_1 - x}-1}{q_1}\right)$ is concave and $g(0) = \log\left( 1 + \frac{e^{\delta_1}-1}{q_1}\right)$ and $g(\delta_1) = 0$.
    Thus, $g(x) \geq \left(1-\frac{x}{\delta_1}\right) \log\left( 1 + \frac{e^{\delta_1}-1}{q_1}\right)$.
Therefore, we can ensure \eqref{eq:c-constraint} by imposing
\begin{equation}
    \sum_{j=2}^d c_j \leq \left(1-\frac{c_1}{\delta_1}\right) \log\left( 1 + \frac{e^{\delta_1}-1}{q_1}\right).
\end{equation}   

\end{proof}
\subsection{Proof of \cref{lemma:bdp_data_processing}} \label{sec:PostProc}
\lemPostProc*
\begin{proof}
Let the $\sigma$-field on $\cZ$ be denoted by $\cF_{\cZ}$. For $T \in \cF_{\cZ}, Y(T) := \{ y \in \cY| K(y) \in T\} \in \cF_{\cY} $ since $K$ is a measurable function.
Thus, we have the following for any $i \in [d], T \in \cF_{\cZ}$ and $S_i, S_i' \in \cF_{\cX_i}^+$ by BCDP guarantees on $M$
    \begin{align}
         \Pr\{K(M(\vecx)) \in T| x_i \in S_i\} &= \Pr\{M(\vecx) \in Y(T)| x_i \in S_i\} \\
         &\leq e^{\delta_i} \Pr\{M(\vecx) \in Y(T)| x_i \in S_i'\} \\
         &= e^{\delta_i} \Pr\{K(M(\vecx)) \in T| x_i \in S_i'\}.
    \end{align}

\end{proof}

\section{Private Mean Estimation}

\subsection{Numerical Experiment} \label{sec:mean-exp}
\begin{figure}[h]
    \centering
    \includegraphics[width=0.5\textwidth]{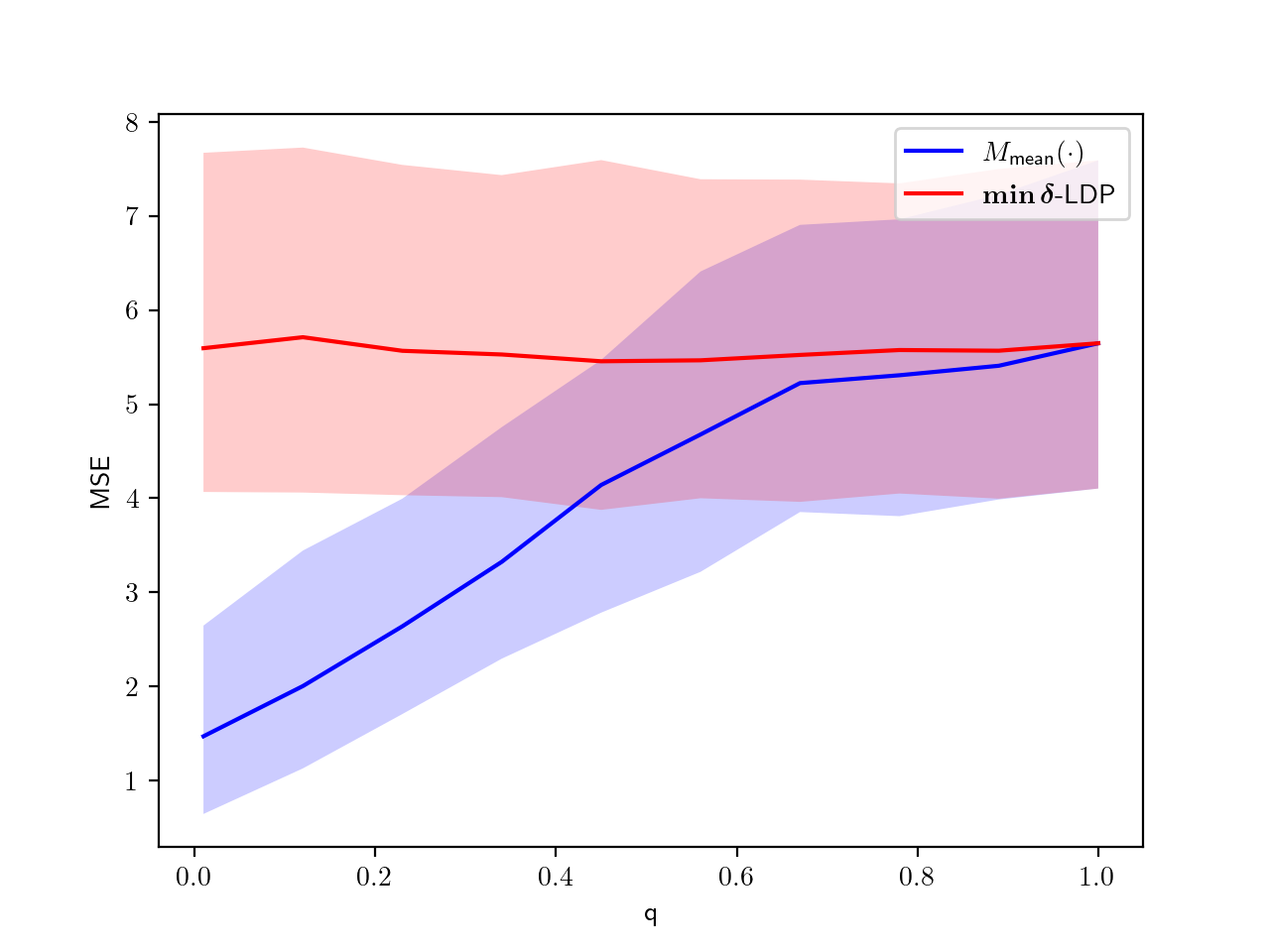}
    \caption{MSE as $q$ is varied keeping $\bm\delta$ and $\varepsilon$ constant for $M_{\mathsf{mean}}$ and $\min\bm\delta$-LDP. }
    \label{fig:exp}
\end{figure}

Here, we provide a simple numerical experiment to validate the performance of our proposed algorithm. For baseline comparison, we consider the $M_{\mathsf{LDP}}$ privacy mechanism from \citet{duchi2013local} with privacy parameter $\min \bm\delta$. For our proposed privacy mechanism $M_{\mathsf{mean}}$, we have a degree of freedom in choosing $\zeta$. We note that, as $q \to 1$, we would want $\zeta \to 1$ since in this case, the best algorithm is to do $\min \bm\delta$-LDP. As $q \to 0$, we would want $c_i = \tilde \delta_i$ as CDP is ideal in this case. Thus, we settle with a heuristic choice $\zeta = (1+q)/2$.

We first construct a distribution for which \cref{assumption:bounded_TV} holds.
We consider the distribution where 
\begin{align}
    Z &\sim \mathsf{Bern}\left(\frac{1}{2}\right), \\
    (x_1,\ldots,x_d) &= \begin{cases}
        (Z,\ldots,Z) & \text{ w.p. } q \\
        \text{i.i.d.}\ \ \mathsf{Bern}\left(\frac{1}{2}\right) & \text{ else.}
    \end{cases}
\end{align}
It is easy to verify that $\mathsf{TV}(\pi_{\vecx_{-i}|x_i = 1},\pi_{\vecx_{-i}|x_i = 0}) = q \ \ \forall i$ in this setting.
We fix $d=10$,  $\bm\delta = (0.2,0.2,\varepsilon,\varepsilon,\ldots,\varepsilon)$, and $\varepsilon=2$. 
This coordinate-wise privacy demand captures the impact of requiring more privacy protection for the first two-coordinates on the algorithm's error. 

We vary the correlation $q$ and plot the median MSE values, along with the $25$-th and $75$-th quantiles, for our proposed algorithm BCDP and the baseline algorithm in \cref{fig:exp}.
We set $n=10000$, and sample user data i.i.d. $2 \times $Bern$(\frac{1}{2})-1$.
The experiment is run over 1000 trials.

When $q \approx 1$, there is a high degree of correlation between the coordinates and our algorithm resembles $\min \bm\delta$-LDP.
For other values of $q$, the proposed BCDP algorithm can leverage the heterogeneous nature of the privacy demand and obtain better performance.

\subsection{Proofs} 

\label{sec:proof:theorem:mean_estimation}
\thmMeanEstimation*

\begin{proof} Our proof has two main parts that are presented below: proof of privacy guarantee, and proof of the error bounds.

\textbf{Proof of privacy:}  Note that we use the channel $M_{\mathsf{LDP}}(.)$ a total of $d$ times with varying size inputs in Mechanism \ref{alg:local-channels}. Focusing on any particular data point $\vecx$, let
\begin{equation}
\tilde{M}(\vecx) := \left (M_{\mathsf{LDP}}(\vecx_{i:d}, c_i - c_{i-1}, \sqrt{d-i+1}) \right )_{i=1}^d.
\end{equation}
First, notice that, by composition, $\tilde{M}$ is $c_d$-LDP. Given that $c_d \leq \tilde{\delta}_d \leq \varepsilon$, this immediately implies that our algorithm is $\varepsilon$-LDP. To show the \bdp bound, given \cref{lemma:bdp_data_processing}, it suffices to show $\tilde{M}$ is $\bm \delta$-\bdp. First, notice that \cref{proposition:LDP_to_BCDP}, along with choice of $c_d \leq \tilde{\delta}_d$, implies that this algorithm is $\delta_d$-\bdp with respect to coordinate $d$. In fact, for any $i$ for which $c_d \leq \tilde{\delta}_i$, a similar argument implies that $\tilde{M}$ is $\tilde{\delta}_i$-BCDP with respect to coordinate $i$. Hence, we could only focus on other coordinates $i < d$ for which $c_d > \tilde{\delta}_i$. 

Next, we claim that $\tilde{M}(\vecx)$ is $\vecc$-CDP.
To see this, notice that, by perturbing the $k$-th coordinate of $\vecx$, the terms affected in $\tilde{M}(\vecx)$ are $M_{\mathsf{LDP}}(\vecx_{i:d}, c_i - c_{i-1}, \sqrt{d-i+1})$ for $i \leq k$; in \cref{fig:algorithm-scheme}, this corresponds to the bottom $i$ rows of the $Y$(s), i.e., $\{Y^k\}_{k \in [i]}$.
Thus, it suffices to show
\begin{equation}
\tilde{M}_k(\vecx) := \left (M_{\mathsf{LDP}}(\vecx_{i:d}, c_i - c_{i-1}, \sqrt{d-i+1}) \right )_{i=1}^k
\end{equation}
is $c_k$-CDP with respect to coordinate $k$. However, this immediately follows  as $\tilde{M}_k$ is $c_k$-LDP, implying it is also $c_k$-CDP with respect to the $k$-th coordinate.

As a result, by the second part of \cref{prop:cdp-to-bdp}, and given that $\tilde{M}$ is $c_d$-LDP and \cref{assumption:bounded_TV} holds, we obtain that $\tilde{M}$ is $c_i + \log(1+qe^{c_d}-q)$-\bdp with respect to coordinate $i$. It remains to show that $c_i + \log(1+qe^{c_d}-q) \leq \delta_i$ for any $i<d$. This also follows from the choice of $c_i$ for $i<d$ and $c_d > \tilde{\delta}_i$, which are the conditions we are operating under. It is worth noting that the condition 
\begin{equation}
c_d \leq \log(\frac{e^{\zeta \tilde{\delta}_1}+q-1}{q})     
\end{equation}
guarantees that the chosen $c_i$'s are nonnegative, given $\zeta \in (0,1)$.

\textbf{Proof of the mean-squared error bound:}

\citet{Duchi13} show that the channel $M_{\mathsf{LDP}}(\cdot)$'s output is unbiased and it holds that for any $\vecx \in [-1,1]^d$, we have
\begin{equation}
    \bbE [ \|M_{\mathsf{LDP}}(\vecx,\varepsilon, d) - \vecx\|^2_2 |\vecx] \leq \cO(d) \left ( \frac{d}{\varepsilon^2} \wedge 1 \right ).
\end{equation}
Note that, due to the symmetry of \cref{alg:duchi-mldp}, the error is identically distributed across each coordinate in expectation. 
Hence, for any vector $\vecx$ and any $k \leq i$, the random variable $Y^{k}_{i-k+1}$ is an unbiased estimators of $\vecx_{i}$ with its variance bounded by $\mathcal{O}(1)  \left ( \frac{d-i+1}{(c_i-c_{i-1})^2} \wedge 1 \right )$.

Note that, the errors from different runs of the algorithm are independent. Therefore, using the Cauchy–Schwarz inequality, we can see that the minimum variance in estimating $\vecx_{i}$ through a linear combination of $Y^{(i,j)}$ is achieved by choosing weights proportional to the inverse of the variances which would result in the following estimator
\begin{equation} \label{eqn:estimator:fixj_fixi}
\frac{\sum_{k=1}^i  Y_{i+1-k}^{k} w_k}{\sum_{k=1}^i w_k} 
\end{equation}
with the error rate given by
\begin{equation}
\mathcal{O}(1) \frac{1}{\sum_{k=1}^i \frac{(c_k - c_{k-1})^2}{d-k+1} } \wedge 1.
\end{equation}
Also, note that, the algorithm's runs over different users are independent. Therefore, taking average of $M_{\mathsf{mean}}(\vecx^{(j)},\vecc)$ over $j$, i.e., different users' data, gives us an estimator of 
\begin{equation}
\frac{1}{n}\sum_{j=1}^n \vecx^{(j)}_{i}    
\end{equation}
with the error rate 
\begin{equation}
\mathcal{O}(1) \frac{1}{n} \cdot \frac{1}{\sum_{k=1}^i \frac{(c_k - c_{k-1})^2}{d-k+1} } \wedge 1.    
\end{equation}
Summing this over the coordinates gives us the following upper bound:
\begin{equation}
\mathcal{O}(1) \frac{1}{n} \sum_{i=1}^d \frac{1}{\sum_{k=1}^i \frac{(c_k - c_{k-1})^2}{d-k+1} } \wedge d.    
\end{equation}
\end{proof}

\CorMse*
\begin{proof}
    For convenience, we set $\zeta = 0.5$. We do not focus on refining the constants in the Mean-Squared Error (MSE) bound with a potentially better choice of $\zeta$.
    Now consider two cases for the correlation bound $q$
    \begin{enumerate}
        \item $q \in [0, \frac{e^{\delta/2}-1}{e^{\epsilon}-1}]$: in this case, we have $c_{k+1} = c_{k+2} = \ldots = c_d = \epsilon$ and $c_1 = c_2 = \ldots = c_k = \delta - \log(1+qe^{\epsilon}-q)$.
        Using the range of $q$, we get $\delta - \log(1+qe^{\epsilon}-q) \in [\delta/2, \delta]$.
        Thus, replacing the $c_1$ to $c_k$ values by the worst-case of $\delta/2$ we get an error of 
        \begin{equation}
            \mathsf{MSE} \lesssim \frac{d}{n} \left( \frac{k}{\delta^2} + \frac{d-k}{\delta^2 + \frac{d}{d-k} (\epsilon - \delta/2)^2 } \right) \wedge d
        \end{equation}
        Using $\epsilon \geq 2 \delta$, we get 
        \begin{equation}
            \mathsf{MSE} \lesssim \left( \frac{dk}{n\delta^2} + \frac{(d-k)^2}{n\epsilon^2} \right) \wedge d
        \end{equation}
        \item $q \in (\frac{e^{\delta/2}-1}{e^{\epsilon}-1},1]$: in this case we have $c_{k+1} = \ldots = c_d  = \log\frac{e^{\delta/2} + q - 1}{q}$ and $c_1 = \ldots = c_k = \delta -  \log(1+qe^{c_d}-q) = \delta/2$. 
         Thus, we get an error of 
        \begin{equation}
            \mathsf{MSE} \lesssim \left( \frac{dk}{n\delta^2} + \frac{(d-k)d}{n \left( \delta^2 + \frac{d}{d-k} (c_d - \delta/2)^2 \right)} \right) \wedge d
        \end{equation}
    \end{enumerate}
    
\end{proof}

\section{Least-Squares Regression}
\label{sec:appendix-LSE}

\subsection{Proof of \cref{theorem:LSE}} \label{proof:TheoremLSE}
\thmLSE*
\begin{proof}
\textbf{Proof of privacy:} The proof of the privacy guarantee follows a similar structure to the proof of \cref{theorem:mean_estimation}. Note that while composition does not hold for BCDP, it does hold for LDP and CDP. This is why we divide $\vecc$ by two, rather than dividing $\bm{\delta}$ by two. Additionally, since the OPT algorithm only observes the private copies of the data, by the post processing inequality, it does not matter how the algorithm finds the minimizer (e.g., how many passes it makes over each private data point), as the privacy guarantee remains unchanged.

\textbf{Proof of error rate:} Without loss of generality, we can ignore the constant term in $f(\cdot)$ that does not depend on $\btheta$, and hence, with a slight abuse of notation, we can rewrite $f(\cdot)$ as
\begin{equation}
{f}(\btheta) := \frac{1}{2n} \sum_{i=1}^n \left (\btheta^\top \bm{{z}}^{(i)}\bm{{z}}^{{(i)}^\top} \btheta - 2\btheta^\top  {l}^{(i)} \bm{{z}}^{(i)} \right ).    
\end{equation}
Also, recall that $\hat{f}(\cdot)$ is given by
\begin{equation}
\hat{f}(\btheta) := \frac{1}{2n} \sum_{i=1}^n \left (\btheta^\top \bm{\hat{z}}^{(i,1)}\bm{\hat{z}}^{{(i,2)}^\top} \btheta - 2\btheta^\top  \hat{l}^{(i,1)} \bm{\hat{z}}^{(i,2)} \right ).    
\end{equation}
Denote the minimizer of $\hat{f}(\btheta)$ over $\Theta$ by $\btheta^*(\{\bm{\hat{x}}^{(i,1)}, \bm{\hat{x}}^{(i,2)}\}_{i=1}^n)$. Also, we denote the output of \cref{alg:LSE_BCDP} by $\bm{\hat{\theta}}(\{\bm{\hat{x}}^{(i,1)}, \bm{\hat{x}}^{(i,2)}\}_{i=1}^n)$. Finally, recall that $\btheta^*$ denotes the minimizer of function $f$.

\textit{For convenience and brevity, we drop the conditioning on $\vecx^{(1)},\ldots,\vecx^{(n)}$ from the notation, but all expectations henceforth are conditioned on $\vecx^{(1)},\ldots,\vecx^{(n)}$.}

Next, note that
\begin{enumerate}
\item  By the definition of OPT, we have
\begin{equation} \label{eqn:LSEproof_eq1}
\hat{f} \left (\bm{\hat{\theta}}(\{\bm{\hat{x}}^{(i,1)}, \bm{\hat{x}}^{(i,2)}\}_{i=1}^n)  \right) - \hat{f} \left( \btheta^*(\{\bm{\hat{x}}^{(i,1)}, \bm{\hat{x}}^{(i,2)}\}_{i=1}^n)\right)  \leq \frac{1}{n}. 
\end{equation}
\item By the definition of $\btheta^*(\{\bm{\hat{x}}^{(i,1)}, \bm{\hat{x}}^{(i,2)}\}_{i=1}^n)$, we have
\begin{equation}\label{eqn:LSEproof_eq2}
\hat{f} \left( \btheta^*(\{\bm{\hat{x}}^{(i,1)}, \bm{\hat{x}}^{(i,2)}\}_{i=1}^n)\right) - \hat{f}(\btheta^*) \leq 0.    
\end{equation}
\item Given the construction of $\hat{f}$, it is an unbiased estimator of $f$ for any fixed $\btheta$. Hence, we have
\begin{equation}\label{eqn:LSEproof_eq3}
\mathbb{E}[\hat{f}(\btheta^*) - f(\btheta^*)] = 0.    
\end{equation}
\end{enumerate}
Summing all the three, given us
\begin{equation}
\mathbb{E} \left [ \hat{f} \left (\bm{\hat{\theta}}(\{\bm{\hat{x}}^{(i,1)}, \bm{\hat{x}}^{(i,2)}\}_{i=1}^n)  \right) - f(\btheta^*) \right ]    \leq \frac{1}{n}.
\end{equation}
However, we need to bound 
\begin{equation}
\mathbb{E} \left [ {f} \left (\bm{\hat{\theta}}(\{\bm{\hat{x}}^{(i,1)}, \bm{\hat{x}}^{(i,2)}\}_{i=1}^n)  \right) - f(\btheta^*) \right ].
\end{equation}
Therefore, it suffices to bound
\begin{equation} \label{eqn:LSEproof_eq4}
\mathbb{E} \left [ {f} \left (\bm{\hat{\theta}}(\{\bm{\hat{x}}^{(i,1)}, \bm{\hat{x}}^{(i,2)}\}_{i=1}^n)  \right) - \hat{f} \left (\bm{\hat{\theta}}(\{\bm{\hat{x}}^{(i,1)}, \bm{\hat{x}}^{(i,2)}\}_{i=1}^n)  \right) \right ].    
\end{equation}
Observe that this term is upper bounded by 
\begin{equation} \label{eqn:LSEproof_eq5}
\mathbb{E} \left [ \sup_{\btheta} (f(\btheta) - \hat{f}(\btheta))  \right ].    
\end{equation}
Note that \eqref{eqn:LSEproof_eq5} is upper bounded by 
\begin{equation} \label{eqn:LSEproof_eq6}
\sup_{\btheta \in \Theta} \|\theta\|^2 
\mathbb{E} \left [ \left \| \frac{1}{n} \sum_{i=1}^n \bm{{z}}^{(i)}\bm{{z}}^{{(i)}^\top} - \bm{\hat{z}}^{(i,1)}\bm{\hat{z}}^{{(i,2)}^\top} \right \|\right ]
+
\sup_{\btheta \in \Theta} \|\theta\| 
\mathbb{E} \left [ \left \| \frac{1}{n} \sum_{i=1}^n  {l}^{(i)} \bm{{z}}^{(i)} - \hat{l}^{(i,1)} \bm{\hat{z}}^{(i,2)} \right \| \right ].
\end{equation}
We bound the first term above. The second term can be bounded similarly. To do so, 
first note that we can decompose $ \bm{\hat{z}}^{(i,1)}\bm{\hat{z}}^{{(i,2)}^\top} - \bm{{z}}^{(i)}\bm{{z}}^{{(i)}^\top}$ as
\begin{equation}
(\bm{\hat{z}}^{(i,1)} - \bm{{z}}^{(i)}) \bm{{z}}^{{(i)}^\top} 
+ \bm{{z}}^{(i)} (\bm{\hat{z}}^{(i,2)} - \bm{{z}}^{(i)})^\top 
+ (\bm{\hat{z}}^{(i,1)} - \bm{{z}}^{(i)})(\bm{\hat{z}}^{(i,2)} - \bm{{z}}^{(i)})^\top.  
\end{equation}
Hence, it suffices to bound the expected norm of the average of each term separately, and all of them can be handled similarly. We will focus on the last term in particular, i.e., we will bound
\begin{equation}\label{eqn:LSEproof_eq7}
\mathbb{E} \left [ \left \| \frac{1}{n} \sum_{i=1}^n A^i \right \|\right ] 
\end{equation}
with $A^i:= (\bm{\hat{z}}^{(i,1)} - \bm{{z}}^{(i)})(\bm{\hat{z}}^{(i,2)} - \bm{{z}}^{(i)})^\top$.
Next, we make the following claim regarding matrices $\{A^i\}_{i=1}^n$, proved in \cref{sec:pf-matrix}.
\begin{lemma}\label{lemma:matrix}
Let 
\begin{equation}
r^2 := \sup_{\vecx\in[-1,1]^d} \mathbb{E}[\|M_{\mathsf{mean}}(\vecx,\vecc/2)-\vecx\|^2|\vecx].    
\end{equation}
Then, the following two results hold:
\begin{align}
\| \sum_i \mathbb{E}[A^i A^{i^\top}] \| &\leq  n r^4,  \\
\mathbb{E}[\max_i \|A^i\|^2 ] & \leq n r^4.
\end{align}
\end{lemma}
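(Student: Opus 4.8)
The plan is to exploit the rank-one structure of each $A^i = (\bm{\hat z}^{(i,1)} - \bm{z}^{(i)})(\bm{\hat z}^{(i,2)} - \bm{z}^{(i)})^\top$ together with the independence of the two private copies. I would introduce the shorthand $\bm{u}^i := \bm{\hat z}^{(i,1)} - \bm{z}^{(i)}$ and $\bm{v}^i := \bm{\hat z}^{(i,2)} - \bm{z}^{(i)}$ for the two feature-error vectors, so that $A^i = \bm{u}^i (\bm{v}^i)^\top$. Since $\bm{\hat x}^{(i,1)}$ and $\bm{\hat x}^{(i,2)}$ are produced by two independent calls to $M_{\mathsf{mean}}(\vecx^{(i)}, \vecc/2)$, the vectors $\bm{u}^i$ and $\bm{v}^i$ are independent. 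Moreover, restricting the error to the feature coordinates discards the label-coordinate contribution, so that (conditionally on the fixed data $\vecx^{(i)}$) we have $\mathbb{E}[\|\bm{u}^i\|^2] \leq \mathbb{E}[\|M_{\mathsf{mean}}(\vecx^{(i)}, \vecc/2) - \vecx^{(i)}\|^2] \leq r^2$, and likewise $\mathbb{E}[\|\bm{v}^i\|^2] \leq r^2$, where the final bound uses that $r^2$ is defined as a supremum over all possible inputs.

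For the first inequality, I would compute $A^i (A^i)^\top = \bm{u}^i (\bm{v}^i)^\top \bm{v}^i (\bm{u}^i)^\top = \|\bm{v}^i\|^2 \, \bm{u}^i (\bm{u}^i)^\top$, using that $(\bm{v}^i)^\top \bm{v}^i$ is a scalar. Taking expectations and invoking the independence of $\bm{u}^i$ and $\bm{v}^i$ gives $\mathbb{E}[A^i (A^i)^\top] = \mathbb{E}[\|\bm{v}^i\|^2] \, \mathbb{E}[\bm{u}^i (\bm{u}^i)^\top]$. The matrix $\mathbb{E}[\bm{u}^i (\bm{u}^i)^\top]$ is positive semidefinite, so its operator norm is at most its trace, which equals $\mathbb{E}[\|\bm{u}^i\|^2] \leq r^2$; combined with $\mathbb{E}[\|\bm{v}^i\|^2] \leq r^2$ this yields $\|\mathbb{E}[A^i (A^i)^\top]\| \leq r^4$. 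Summing over $i$ and applying the triangle inequality for the operator norm (each summand being PSD) gives $\|\sum_i \mathbb{E}[A^i (A^i)^\top]\| \leq n r^4$.

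For the second inequality, I would use that the operator norm of the rank-one matrix $\bm{u}^i (\bm{v}^i)^\top$ factors as $\|A^i\| = \|\bm{u}^i\|\,\|\bm{v}^i\|$, so that $\|A^i\|^2 = \|\bm{u}^i\|^2 \|\bm{v}^i\|^2$. Since every term is nonnegative, the maximum is dominated by the sum, $\mathbb{E}[\max_i \|A^i\|^2] \leq \sum_i \mathbb{E}[\|\bm{u}^i\|^2 \|\bm{v}^i\|^2]$, and independence factors each summand as $\mathbb{E}[\|\bm{u}^i\|^2]\,\mathbb{E}[\|\bm{v}^i\|^2] \leq r^4$, giving the claimed bound $n r^4$.

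There is no serious obstacle here: both bounds fall out once the rank-one structure and the independence of the two copies are made explicit. The only points requiring minor care are the passage from the operator norm of the PSD matrix $\mathbb{E}[\bm{u}^i (\bm{u}^i)^\top]$ to its trace, and the bookkeeping confirming that the feature-only error is controlled by the full-vector quantity $r^2$. The crude bound of the maximum by the sum in the second inequality is deliberately loose, which is acceptable because this term feeds into a matrix Bernstein inequality where it affects only lower-order logarithmic factors in the final rate.
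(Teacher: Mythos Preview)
Your proof is correct and essentially identical to the paper's: both compute $A^i(A^i)^\top = \|\bm{v}^i\|^2\,\bm{u}^i(\bm{u}^i)^\top$, use independence of the two copies to factor the expectation, and bound the max by the sum in the second part. The only cosmetic difference is that you bound $\|\mathbb{E}[\bm{u}^i(\bm{u}^i)^\top]\|$ by its trace while the paper uses Jensen together with $\|uu^\top\|=\|u\|^2$; both routes give $\mathbb{E}[\|\bm{u}^i\|^2]\leq r^2$.
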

Let us first show how this lemma completes the proof. Using Theorem 1 in \cite{tropp2016expected}, Jensen's inequality, along with this lemma, we can immediately upper bound \eqref{eqn:LSEproof_eq7} by 
\begin{equation}
\mathcal{O}(1) \log(d) \frac{r^2}{\sqrt{n}}.    
\end{equation}
Similarly, the terms $\|(\bm{\hat{z}}^{(i,1)} - \bm{{z}}^{(i)}) \bm{{z}}^{{(i)}^\top}\|$ and $ \|\bm{{z}}^{(i)} (\bm{\hat{z}}^{(i,2)} - \bm{{z}}^{(i)})^\top\|$ can be upper bounded by $\cO(\frac{\log(d)rd}{\sqrt{n}})$ using similar steps as proof of \cref{lemma:matrix}.

Recall  $r^2 = \mathcal{O}(1) \sum_{i=1}^d \frac{1}{\sum_{k=1}^i \frac{(c_k - c_{k-1})^2}{d-k+1} }$.  Note that we use $M_{\mathsf{mean}}$ without the truncation to $[-1,1]^d$ to preserve unbaisedness, and hence, the minimum with $d$ does not appear in $r^2$, unlike \cref{theorem:mean_estimation}.
Thus we obtain the bound
\begin{equation} 
\mathbb{E} \left [ \sup_{\btheta} (f(\btheta) - \hat{f}(\btheta))  \right ] \lesssim \max_{\btheta \in \Theta} (\|\btheta\|^2+ \|\btheta\|) \left( \frac{r^2 \log d }{\sqrt{n}} + \frac{r d\log d }{\sqrt{n}} \right).
\end{equation}
We can also project the estimate $\hat{\theta}$ to the set $\Theta$ to obtain the bound $\max_{\btheta \in \Theta} (\|\btheta\|^2+ \|\btheta\|)d$.
Taking minimum of the terms, we get \cref{theorem:LSE}.

\subsection{Proof of \cref{lemma:matrix}}
\label{sec:pf-matrix}
First, note that
\begin{equation}
A^i A^{i^\top} = \|\bm{\hat{z}}^{(i,2)} - \bm{{z}}^{(i)}\|^2 (\bm{\hat{z}}^{(i,1)} - \bm{{z}}^{(i)})(\bm{\hat{z}}^{(i,1)} - \bm{{z}}^{(i)})^\top,      
\end{equation}
which yields
\begin{equation}
\|\mathbb{E}[A^i A^{i^\top}]\| \leq r^2 \mathbb{E}[\|(\bm{\hat{z}}^{(i,1)} - \bm{{z}}^{(i)})(\bm{\hat{z}}^{(i,1)} - \bm{{z}}^{(i)})^\top\|].    
\end{equation}
Using the result that $\|uu^\top\| = \|u\|^2$ for any vector $u$ completes the proof of the first part of the lemma. To prove the second part, note that
\begin{equation}
\mathbb{E}[\max_i \|A^i\|^2 ] \leq
\mathbb{E}[\sum_{i=1}^n \|A^i\|^2] = \sum_{i=1}^n \mathbb{E}[\|A^i\|^2]. 
\end{equation}
Using the result that $\|uv^\top\| \leq \|u\| \|v\|$ for any two vectors $u$ and $v$ implies
\begin{equation}
\mathbb{E}[\|A^i\|^2] \leq 
\mathbb{E}\left[ \|\bm{\hat{z}}^{(i,1)} - \bm{{z}}^{(i)}\|^2 \|\bm{\hat{z}}^{(i,2)} - \bm{{z}}^{(i)}\|^2\right ].
\end{equation}
Finally, the conditional independence of $\bm{\hat{z}}^{(i,1)} $ and $\bm{\hat{z}}^{(i,2)} $ given $z^{(i)}$ completes the proof of lemma. 
\end{proof}

\end{document}